\newcommand{\figcenter}[1]{\raisebox{-0.5\height}{#1}}
\def\expandafter\normalsize\expandafter{%
\normalsize\setlength\abovedisplayskip{3pt}}
\def\expandafter\normalsize\expandafter{%
\normalsize\setlength\belowdisplayskip{3pt}}
\newcommand{\colseperator}{\hspace*{.4in}}
\newtheorem{proposition}{Proposition}
\def\sign{\operatorname*{sign\,}}
\def\tsh{{TSH}\xspace}
\def\x{{\boldsymbol x}}
\def\w{{\boldsymbol w}}
\def\z{{\boldsymbol z}}
\def\bh{{\Phi}}
\def\Z{{\bf  Z}}
\def\Y{{\bf  Y}}
\def\A{{\bf  A}}
\def\bY{{\bf  Y}}
\DeclareMathOperator{\Xcal}{\mathcal{X}}
\def\Real{\mathbb{R}}
\def\T{{\!\top}}
\begin{document}

\title{A General Two-Step Approach to Learning-Based Hashing\thanks{Appearing in
    Proceedings of 14th International Conference on Computer Vision (ICCV) 2013.
}}

\author{
    Guosheng Lin ~~
        Chunhua Shen\thanks{Corresponding author (e-mail: {chunhua.shen@adelaide.edu.au}).
        } ~~
    David Suter ~~
        Anton van den Hengel
        \\
        School of Computer Science,  University of Adelaide, SA 5005, Australia
}

\maketitle
\thispagestyle{empty}

\begin{abstract}
Most existing approaches to hashing apply a single form of hash function, and an
optimization process which is typically deeply coupled to this specific
form. %
This tight coupling restricts the flexibility of the method to respond to the data, and
can result in complex optimization problems that are difficult to solve. Here we propose
a flexible yet simple framework that is able to accommodate different types of loss
functions and hash functions. This framework allows a number of existing approaches to
hashing to be placed in context, and simplifies the development of new problem-specific
hashing methods. Our framework decomposes hashing learning problem into two steps:
hash bit learning and hash function learning based on the learned bits. The first step
can typically be formulated as binary quadratic problems, and the second step can be
accomplished by training standard binary classifiers. Both problems have been extensively
studied in the literature. Our extensive experiments demonstrate that the proposed
framework is effective, flexible and outperforms the state-of-the-art.

\end{abstract}

\section{Introduction}

Recently hashing methods have been widely used for
a variety of applications, but have been particularly successful when applied to
 approximate nearest
neighbour search.  Hashing methods construct a set of hash
functions that map the original high-dimensional data into a
compact binary space.  The resulting binary codes enable fast
similarity search on the basis of the hamming distance between codes. Moreover, compact binary
codes are extremely  efficient for large-scale data storage. Applications in
computer vision include content-based image retrieval,
object recognition \cite{TINY80M}, image matching, etc. %
In general, hash functions are generated with the aim of  preserving some notion of similarity between data points.
One of the seminal approaches in this vain is Locality-Sensitive Hashing (LSH)
\cite{Gionis1999}, which randomly generates hash functions to
approximate cosine similarity. Compared to this data-independent
method, more recent work has focussed on data-dependant approaches for
generating more effective hash functions. In this category,
a number of methods have been proposed including: Spectral
Hashing (SPH)
\cite{weiss2008spectral}, Multi-dimension Spectral Hashing (MDSH)
\cite{MDSH},  Iterative Quantization (ITQ) \cite{gong2012iterative}
Anchor Graph Hashing (AGH) \cite{liu2011hashingGraphs},
Self-taught Hashing (STH) \cite{zhang2010self}, and nonparametric Inductive Hashing on manifolds \cite{CVPR13aShen}.
% Kernelized
% Locality-Sensitive Hashing (KLSH) \cite{KLSH}.

These methods do not rely on labelled data and are thus categorized unsupervised
hashing methods.
Supervised hashing methods have also been extensively studied: Supervised Hashing with Kernels (KSH) \cite{KSH},
Minimal Loss Hashing (MLH) \cite{norouzi2011minimal}, Supervised Binary Reconstructive
Embeddings (BRE) \cite{kulis2009learning}, Semi-supervised sequential Projection Learning
Hashing (SPLH) \cite{wang2010semi}, and Column Generation Hashing \cite{ICML13a}.

Loss functions %
for hashing
are typically defined on the basis of the hamming distance
(e.g., BRE, MLH) or hamming affinity (e.g., KSH, MDSH, SPLH) of similar and
dissimilar data pairs. Hamming affinity is calculated by the inner
product of two binary codes (a binary code takes a value of
$\{-1,1\}$).
Existing methods thus tend to optimise a single form of hash function, the parameters of which are
directly optimised
against the overall loss function.  The common forms of hash function
include linear perceptron functions (MLH, SPLH, LSH), kernel functions
(KSH, KLSH), eigenfunctions (SH, MDSH).
The optimization procedure is then coupled with the selected family of hash
function.
Different types of hash functions offer a trade-off between testing time and
ranking accuracy. For example, compared to kernel functions, the
simple linear perceptron function is usually much more efficient for
evaluation but can have a relatively low accuracy for nearest neighbour
search.
Moreover, this coupling often results in a highly
non-convex problem which is can be very difficult to optimize.

As an example, the loss functions in MDSH, KSH and BRE all take a
similar form that aims to minimize the difference between the hamming
affinity (or distance) and the ground truth of
data pairs. However, the optimization procedures used in these methods are
coupled with the form of hash functions (eigenfunctions, kernel
functions) and thus different optimization techniques are needed.

Self-Taught Hashing (STH)  \cite{zhang2010self} is a method which
decomposes the learning procedure into two steps: binary code
generating and hash function learning. We extend this idea and
proposed a general two-step approach to hashing
of which STH can be seen as a specific example.
Note that
STH optimizes the Laplacian affinity loss, which only tries to pull
together those similar data pairs but does
not push away those dissimilar data pairs
and that, as has been
shown in  manifold
learning, this may lead to inferior performance.

Our framework, however, is able to
accommodate many different loss functions defined on the hamming affinity
of data pairs, such as the loss function used in KSH, BRE or
MLH.
This more general family of loss functions may consider both
similar
and
dissimilar data pairs.  In order to produce effective binary codes in
this first step, we develop a new technique based on coordinate
descent. We show that at each iteration of coordinate descent,
we can formulate the
optimization problem of any hamming affinity loss as a  binary quadratic
problem (BQP).
This formulation unifies different types  of objective function into
the same optimization problem, which significantly simplifies the optimization
effort.
{Our main contributions are as follows.}

\begin{itemize}
\item[1.]
 We propose a flexible hashing framework that decomposes the learning
      procedure into two steps: binary codes inference step and hash
      function learning step. This decomposition simplifies the
      problem and enables the use of
      different types of loss functions and simplifies the hash function
      learning problem into a standard binary classification problem.
An arbitrary classifier, such as linear or kernel SVM,
      boosting, decision tree and neural networks, may thus be adopted
      to train the hash functions.

 \item[2.]
 For binary code inference, we show that optimization using
      different types of loss functions (e.g., loss functions in KSH,
      BRE, MLH) can be solved as a series of binary quadratic
      problems. We show that any type of loss function (e.g., the $
      \ell_2$ loss, exponential loss, hinge loss) defined on hamming
      affinity  of data pairs can be equivalently converted into
      a standard quadratic function. Based on this key observation, we
      propose a general block coordinate decent method that is able to
      incorporate  many different types of loss functions in a unified manner.

\item[3.]

      The proposed method is simple and easy to implement. We carry
      out extensive experiments on nearest neighbour search for image
      retrieval. To show the flexibility, we evaluate our method using several different types of loss
      functions and different formats of hash functions (linear SVM,
      kernel SVM, Adaboost with decision stumps, etc).
      Experiments show that our method outperforms the state-of-the-art.

\end{itemize}

\section{Two-step hashing}

Given a set of training points $\Xcal=\{\x_1, \x_2,...\x_n\} \subset
\Real^d$, the goal of hashing is to learn a set of hash functions that is able to
preserve some notion of similarity between data points. A ground truth
affinity (or distance) matrix, $\bY$, is provided (or calculated by a
pre-defined rule) for training, which
defines the (dis-)similarity relations between data pairs.
In this case $y_{ij}$ is the $ (i,j) $-th element of the matrix $\bY$, which is an
affinity value  of the data pair $(\x_i, \x_j)$.
As a simple example, if the data labels are available, $y_{ij}$ can be defined as $1$ for
 data pairs belonging to the same class and $-1$ for dissimilar data pairs. In the case of
unsupervised learning, $y_{ij}$ can be defined as the Euclidean
distance or Gaussian affinity on data points.
$\bh$ is a set of $m$ hash functions: $\bh=[h_1(\cdot), h_2(\cdot),
\dots, h_m]$.
The output of the hash functions are  $m$-bit binary codes: $\bh(\x) \in \{-1,1\}^m$.
In general, the optimization can be  written as:
 \begin{align}
 	\label{loss1}
	\min_{\bh} \sum_{i=1}^n\sum_{j=1}^n
    \delta_{ij} L(\bh(\x_i), \bh(\x_j); y_{ij}),
 \end{align}
where
$\delta_{ij} \in \{0, 1\}$  indicates
whether the relation between two data points is defined, and
 $L(\bh(\x_i), \bh(\x_j); y_{ij})$ is a loss function that
measures the how well the binary codes
match the expected affinity
(or distance) $y_{ij}$. Many different types
of loss function  $ L $ have been
devised, and will be discussed in detail in the next Section.

Most existing methods try to directly optimize objective
\eqref{loss1} in order to learn
the parameters of hash functions \cite{KSH,
norouzi2011minimal, kulis2009learning, MDSH}.
This inevitably means that the optimisation processes is tightly coupled to the
form of hash function used,
which makes it non-trivial to extend a method to use another different
format of hash function. Moreover, this coupling usually
results in highly non-convex NP-hard problems.
Following the idea of STH \cite{zhang2010self}, we decompose the
learning procedure into two steps: the first step for binary code
inference and the second step for hash function learning. The first
step is to solve the optimization:
 \begin{align}
 	\label{loss2}
	\min_{\Z} \sum_{i=1}^n\sum_{j=1}^n \delta_{ij}
    L(\z_i, \z_j; y_{ij}), \; \mbox{s.t.}\; \Z \in \{-1, 1\}^{n\times m},
 \end{align}
where $\Z$ is the matrix of $m$-bit binary codes for all data points, and
$\z_i$ is the binary code
row
vector corresponding to data point $i$.

The second step is to learn hash functions based on the binary codes
obtained in the first step, which is achieved by solving the optimization problem:
 \begin{align}
 	\label{loss3}
	\min_{\bh} \sum_{i=1}^n F(\z_i, \bh(\x_i)).
 \end{align}
 Here $F(\cdot,\cdot)$ is a loss function.
We solve the
above optimization independently for each of the $m$ bits. To learn the $k$-th
hash function ($h_k$), the optimization can be written:
 \begin{align}
 	\label{loss4}
	\min_{h_k} \sum_{i=1}^n
    F' (z_{i,k}, h_k(\x_i)).
 \end{align}
Here $F'(\cdot,\cdot)$ is an loss function defined on two codes.
Clearly, the above optimization is a binary
classification problem which is to minimize a kind of loss given
the binary labels.
For example it can be an
zero-one step function returning $1$
if two inputs have the same value, and $0$ otherwise.
As in classification, one can also use a convex surrogate to replace
the zero-one loss. Typical surrogate loss functions are hinge loss,
logistic loss, etc.
$z_{i,k}$ is the binary code corresponding to the $i$-th data point and the
$k$-th bit. The resulting classifier is the hash function that we
aim to learn. Therefore, we are able to use any form of classifier. For example,
we can learn perceptron hash functions by training a linear SVM. The
linear perceptron hash function has the form:
 \begin{align}
	h(\x) = \sign (\w^\T\x + b).
 \end{align}
We could also train, for example, an RBF-kernel SVM, or Adaboost with decision
trees for use as hash functions.  Here we describe a kernel
hash function that is learned using a linear SVM on kernel-transferred
features (referred to as SVM-KF). The hash function learned by SVM-KF
has a form as follows:
 \begin{align}
	h(\x) = \sign ( \textstyle \sum_{j=q}^Q w_q\kappa(\x_q', \x) + b),
 \end{align}
in which $\Xcal'=\{\x'_1,\dots,\x'_Q\}$ are $Q$ data points generated
from the training set by random or uniform sampling.

We evaluate variety of  different kinds of hash function in the Experiments
Section below.
These tests show that
 Kernel hash functions often offer better ranking precision
but require much more evaluation time than linear perceptron hash
functions. The hash functions learned by SVM-KF
represents a trade-off
between kernel SVM and linear SVM.

The method we propose is labelled Two-Step Hashing (\tsh),
the steps are as follows:

\noindent
    Step 1: Solving the optimization problem in \eqref{loss2} using block
    coordinate decent ({Algorithm \ref{alg}}) to obtain binary
    codes for each training data point.

\noindent
Step 2: Solving the binary classification problem in
    \eqref{loss4} for each bit based on the binary codes obtained at
    Step 1.

\begin{algorithm}[t!]
\caption{Block coordinate decent for learning binary codes (Step 1)}
\footnotesize{
	1: {\bf Input:} affinity matrix $\Y$, bit length $m$, number of cyclic iteration $r$.

	2: {\bf Initialize} the binary code matrix $\Z$.

	3: {\bf Repeat}

	4: \quad \quad \quad {\bf For} $t = 1, 2, \dots, m$

	5: \quad \quad \quad \quad \quad Solve the binary quadratic problem (BQP) in \eqref{loss6} for

\quad \quad \quad\quad \quad \quad \quad the $t$-th bit to obtain the binary code of $t$-th bit.

	6: \quad \quad \quad Update the code of $t$-th bit in code matrix $\Z$

	7: \quad\quad\quad {\bf End For}

    8: {\bf Until}
        the maximum cyclic iteration $r$ is reached.

    9: {\bf Output:} the matrix of binary codes $\Z$
    }
\label{alg}
\end{algorithm}

\section{Solving binary quadratic problems}

\label{sec:bqp}

Optimising \eqref{loss2} in Step 1 for the entire
binary code matrix can be difficult. Instead, we develop a block
coordinate descent method so that the problem at each iteration can be solved easily.
Moreover, we show that at each iteration, any pairwise hamming
affinity (or distance) based loss  can be equivalently  formulated as a binary quadratic
problem. Thus we are able to easily work with different loss functions.

Block coordinate decent (BCD) is a technique that iteratively
optimizes a subset of variables at a time.
For each iteration, we pick one bit for optimization in a cyclic
fashion.
The optimization for the $k$-th bit can be written as:
 \begin{align}
 	\label{loss41}
	\min_{\z_{(k)}}
    \sum_{i=1}^n\sum_{j=1}^n
    \delta_{ij} l_k(z_{i,k}, z_{j,k}), \;\;\; s.t.\; \z_{(k)} \in \{-1, 1\}^n,
 \end{align}
where $l_k$ is the loss function defined on the $k$-th bit:
$l_k(z_{i,k}, z_{j,k})=L(z_{i,k}, z_{j,k}, \bar{\z_{i}}, \bar{\z_{j}} ; y_{ij}).$
Here $\z_{(k)}$ contains the binary codes of the $k$-th bit. $z_{i,k}$ is
the binary code of the $i$-th data point and the $k$-th bit.
$\bar{\z_{i}}$ is the binary codes of the $i$-th data point excluding
the $k$-th bit.

Until now,  we have not described the form of the loss function $L$.
Our optimization method is not restricted to optimizing a specified form of
the loss function. Based on the following proposition, we are able to
rewrite any hamming affinity (or distance) based loss function $L$
into a standard quadratic problem.
\begin{proposition}
\label{pro:p1}
 For any loss function $l(z_1,z_2)$ that is defined on a pair of binary input variables $z_1,z_2 \in \{-1,1\}$ and $l(1,1)=l(-1,-1), l(1,-1)=l(1,-1)$,
we can define a quadratic function $g(z_1,z_2)$ that is equal to $l(z_1,z_2)$. %
We have following equation:
 \begin{align}
	l(z_1,z_2)& = \frac{1}{2} \biggr[ z_1 z_2 (l^{(11)} - l^{(-11)}) + l^{(11)} + l^{(-11)} \biggr], \\
	& = \frac{1}{2} z_1 z_2 (l^{(11)} - l^{(-11)}) + const\\
	& = g(z_1,z_2).
  \end{align}
Here $l^{(11)}, l^{(-11)}$ are constants, $l^{(11)}$ is the loss
output on identical input pair: $l^{(11)}=l(1,1)$, and $l^{(-11)}$ is
the loss output on distinct input pair: $l^{(-11)}=l(-1,1)$.
\end{proposition}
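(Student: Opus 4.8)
The plan is to exploit the fact that the domain of $l$ is finite: since each of $z_1,z_2$ ranges over $\{-1,1\}$, there are only four possible input pairs. The two symmetry hypotheses, $l(1,1)=l(-1,-1)$ and $l(1,-1)=l(-1,1)$, collapse these four values into just two distinct numbers, namely $l^{(11)}=l(1,1)$ for the ``agreeing'' pairs and $l^{(-11)}=l(-1,1)$ for the ``disagreeing'' pairs. So the entire function is determined by these two constants, and the whole task reduces to finding a quadratic that reproduces exactly two prescribed values on the agree/disagree dichotomy.

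First I would observe that the product $z_1 z_2$ is itself the natural quantity that separates these two cases on $\{-1,1\}$: it equals $+1$ precisely when $z_1=z_2$ (the pairs $(1,1)$ and $(-1,-1)$) and $-1$ precisely when $z_1 \neq z_2$ (the pairs $(1,-1)$ and $(-1,1)$). This suggests the affine ansatz $g(z_1,z_2)=a\,z_1 z_2 + b$, which is already quadratic in the inputs. Matching $g$ to $l$ on the two cases gives the two linear equations $a+b=l^{(11)}$ and $-a+b=l^{(-11)}$. Solving this $2\times 2$ system yields $a=\tfrac12\bigl(l^{(11)}-l^{(-11)}\bigr)$ and $b=\tfrac12\bigl(l^{(11)}+l^{(-11)}\bigr)$, which is exactly the coefficient structure asserted in the statement.

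To finish, I would verify the identity pointwise: substituting each of the four pairs into $g$ and using the symmetry hypotheses confirms $g(z_1,z_2)=l(z_1,z_2)$ everywhere on the domain. The constant term $b$ does not depend on $z_1,z_2$, which justifies absorbing it into $const$ and writing the loss as $\tfrac12 z_1 z_2\,(l^{(11)}-l^{(-11)}) + const$; since constants are irrelevant to the minimisation in \eqref{loss41}, only the coefficient on $z_1 z_2$ matters for the optimisation. A useful remark to record is that no higher-degree terms are needed because $z_i^2=1$ on $\{-1,1\}$, so any monomial such as $z_1^2$ or $z_2^2$ is itself constant; this is why a single bilinear term plus a constant already exhausts all ``quadratic'' behaviour on the Boolean cube.

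There is no substantial obstacle here. The only thing that requires a moment of insight is recognising that $z_1 z_2$ is the correct and sufficient basis element to encode the agree/disagree distinction, and that the symmetry assumptions are exactly what is needed to rule out any term linear in a single variable (such terms would distinguish $(1,-1)$ from $(-1,1)$, which the hypotheses forbid). Once the ansatz is chosen, the remainder is a two-equation linear solve and a four-point verification.
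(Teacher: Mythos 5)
Your proof is correct and rests on the same core argument as the paper's: an exhaustive check of the finitely many input pairs, with the symmetry hypotheses collapsing four cases to two. The extra material you include (deriving the coefficients $a,b$ from a $2\times 2$ linear system, and noting that $z_i^2=1$ makes any further quadratic terms redundant) is a nice motivation for where the formula comes from, but the substance of the verification matches the paper's proof.
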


\begin{proof}
    This proposition can be easily proved by exhaustively checking all
    possible inputs of the loss function. Notice that there are only
    two possible output values of the loss function.
For the input $(z_1=1,z_2=1)$:
 \begin{align}
	g(1,1) & = \frac{1}{2} \biggr[ 1 \times 1 \times  (l^{(11)} - l^{(-11)}) + l^{(11)} + l^{(-11)} \biggr] \notag \\
	& = l(1,1), \notag
  \end{align}
For the input $(z_1=-1,z_2=1)$:
 \begin{align}
	g(-1,1) & = \frac{1}{2} \biggr[ -1 \times 1 \times (l^{(11)} - l^{(-11)}) + l^{(11)} + l^{(-11)} \biggr] \notag \\
	& = l(-1,1), \notag
  \end{align}
The input $(z_1=-1,z_2=-1)$ is the same as $(z_1=1,z_2=1)$ and the input $(z_1=1,z_2=-1)$ is the same as $(z_1=-1,z_2=1)$.
In conclusion, the function $l$ and $g$ have the same output for any possible inputs.
\end{proof}

Any hash loss function $l$ which is defined on the hamming affinity between, or
hamming distance of, data pairs is able to meet the
requirement that: $l(1,1)=l(-1,-1), l(1,-1)=l(1,-1)$. Applying this
proposition, the optimization of \eqref{loss41} can be equivalently
reformulated as:
 \begin{align}
 	\label{loss5}
	\min_{\z_{(k)} \in \{-1, 1\}^n} \sum_{i=1}^n\sum_{j=1}^n \delta_{ij} (l_{k,i,j}^{(11)} - l_{k,i,j}^{(-11)}) z_{i,k}z_{j,k},
 \end{align}
The above optimization is an unconstrained binary quadratic problem.
Let $a_{i,j}$ denote the $ (i,j) $-th
element of matrix $\A$,
which we define as:
 \begin{align}
a_{i,j}=\delta_{ij} (l_{k,i,j}^{(11)} - l_{k,i,j}^{(-11)}).
 \end{align}
The above optimization \eqref{loss5} can be written in matrix form:
 \begin{align}
 	\label{loss6}
	\min_{\z_{(k)}} \z_{(k)}^\T \A \z_{(k)}
    \;\;\; {\rm s.t.}\; \z_{(k)} \in \{-1, 1\}^n.
 \end{align}
We have shown that at each iteration, the original optimization in
\eqref{loss41} can be equivalently  reformulated as a binary quadratic
problem (BQP) in \eqref{loss6}. BQP has been extensively
studied. To solve \eqref{loss6},  we first apply the spectral
relaxation to get an initial solution.
Spectral relaxation drops the binary constraints. The optimization becomes
 \begin{align}
 	\label{loss7}
	\min_{\z_{(k)}} \z_{(k)}^\T \A \z_{(k)}, \;\;\;
    {\rm s.t.} \; \|\z_{(k)}\|_2^2=n
 \end{align}
The solution (denoted $\z_{(k)}^0$) of the above optimization is
simply the eigenvector that corresponds to the minimum eigenvalue of
the matrix $\A$.
To achieve a better solution,
here we take a step further. We solve the following relaxed problem of
\eqref{loss6} as follows
 \begin{align}
 	\label{loss8}
	\min_{\z_{(k)}} \z_{(k)}^\T \A \z_{(k)}, \;\;\;
    {\rm s.t.} \; \z_{(k)} \in =[-1, 1]^n.
 \end{align}
This relaxation is tighter than the spectral relaxation and provides
a solution of better quality.
To solve the above problem, we use the solution $\z_{(k)}^0$ of
spectral relaxation in \eqref{loss7} as initialization and solve it
using the efficient LBFGS-B solver \cite{lbfgs}.
An alternative is to
directly solve the above optimization \eqref{loss8} using random
initialization, without solving the spectral relaxation.  The
algorithm of our block coordinate decent for binary code inference in
S1 is summarized in  Algorithm \ref{alg}.

The approach proposed above is applicable to
many different types of loss functions,
which are
defined on hamming distance or hamming affinity, such as the $\ell_2$
loss, exponential loss, hinge loss.
Here we describe
a selection of such loss functions,
most of which arise from recently
proposed hashing methods. We evaluate these loss functions in the
Experiments Section below. Note that $m$ is the number of bits, and $d_h(\cdot,\cdot)$
is the hamming distance on data pairs. If not specified, $y_{ij}=1$ if
the data pair is similar, and $y_{ij}=-1$ if the data pair is
dissimilar. $\delta(\cdot) \in \{0, 1\}$ is an indicator function.

{\bf TSH-KSH} The KSH loss function is based on hamming affinity using $\ell_2$ loss function. MDSH also uses a similar form of loss function (weighted hamming affinity instead).
 \begin{align}
	L_{\mathrm{KSH}}(\z_i, \z_j)= (\z_i^\T\z_j -my_{ij})^2
 \end{align}

{\bf TSH-BRE} The BRE loss function is based on hamming hamming distance using $\ell_2$ loss function. Here the definition of $y_{ij}$ is different: $y_{ij}=0$ if the data pair is similar, $y_{ij}=1$ if the data pair is dissimilar.
 \begin{align}
	L_{\mathrm{BRE}}(\z_i, \z_j)= (d_h(\z_i,\z_j)/m - y_{ij})^2
 \end{align}

{\bf TSH-SPLH} Uses an exponential loss outside the loss function proposed in SPLH which is based on the hamming affinity of data pairs.
 \begin{align}
	L_{\mathrm{SPLH}}(\z_i, \z_j)= \exp(\frac{-y_{ij}\z_i^\T\z_j}{m})
 \end{align}

{\bf TSH-EE} Elastic Embedding (EE) is a dimension reduction method proposed in \cite{EE}. Here we use their loss function with some modifications, which is a exponential based on distance. $\lambda$ here is a trade-off parameter.
  \begin{align}
	L_{\mathrm{EE}}(\z_i, \z_j) & = \delta({y_{ij}>0})d_h(\z_i, \z_j) \notag\\
		&+ \lambda \delta({y_{ij}<0}) \exp [ -d_h(\z_i, \z_j)/m].
 \end{align}

{\bf TSH-ExpH} Here ExpH is an exponential loss function using the  hamming distance:
 \begin{align}
	L_{\mathrm{ExpH}}(\z_i, \z_j)=
    \exp\biggr[\frac{y_{ij}d_h(\z_i,\z_j) +
    m\delta(y_{ij}<0)}{m}\biggr].
 \end{align}

\begin{table*}
\caption{Results (using
hash codes of 32 bits)
 of TSH using different
loss functions, and a selection of other supervised and unsupervised methods on 3
datasets. The upper part relates the results on training data and the lower
on testing data.
The results show that Step 1  of our method is able to generate
effective binary codes that
outperforms those of competing methods
 on the training
data. On the testing data our method also outperforms others by
a large margin.}
\centering
\resizebox{0.985\linewidth}{!}
  {
  \begin{tabular}{ r | c c c| c c c| c c c}
  \hline \hline

	& \multicolumn{3}{|c}{Precision-Recall}	& \multicolumn{3}{|c}{MAP}	& \multicolumn{3}{|c}{ Precision at K (K=300)}\\
\hline
Datasets	&LABELME	&MNIST	&CIFAR10	&LABELME	&MNIST	&CIFAR10	&LABELME	&MNIST	&CIFAR10\\
\hline
&	\multicolumn{9}{|c}{Results on training data}\\
\hline
TSH-KSH	&0.501	&1.000	&1.000	&0.570	&1.000	&1.000	&0.229	&0.667	&0.667\\
TSH-BRE	&0.527	&1.000	&1.000	&0.600	&1.000	&1.000	&0.230	&0.667	&0.667\\
TSH-SPLH	&0.504	&1.000	&1.000	&0.524	&1.000	&1.000	&0.230	&0.667	&0.667\\
TSH-EE	&0.485	&1.000	&1.000	&0.524	&1.000	&1.000	&0.224	&0.667	&0.667\\
TSH-ExpH	&0.475	&1.000	&1.000	&0.541	&1.000	&1.000	&0.225	&0.667	&0.667\\
\hline
STHs	&0.335	&0.800	&0.629	&0.387	&0.882	&0.774	&0.176	&0.575	&0.433\\
KSH	&0.283	&0.892	&0.585	&0.316	&0.967	&0.652	&0.168	&0.647	&0.481\\
BREs	&0.161	&0.445	&0.220	&0.153	&0.504	&0.190	&0.097	&0.376	&0.171\\
SPLH	&0.166	&0.500	&0.292	&0.153	&0.588	&0.302	&0.092	&0.422	&0.260\\
MLH	&0.120	&0.547	&0.190	&0.142	&0.685	&0.235	&0.100	&0.478	&0.200\\
\hline
\hline
&	\multicolumn{9}{|c}{Results on testing data}\\
\hline
TSH-KSH	&\bf 0.175	&0.843	&0.282	&\bf 0.296	&0.893	&0.440	&\bf 0.293	&0.889	&0.410\\
TSH-BRE	&0.169	&\bf 0.844	&0.283	&0.293	&0.896	&0.439	&\bf 0.293	&0.890	&0.409\\
TSH-SPLH	&0.174	&0.840	&\bf 0.284	&0.291	&0.895	&\bf 0.444	&0.288	&0.891	&\bf 0.416\\
TSH-EE	&0.169	&0.843	&0.280	&0.288	& 0.896	&0.438	&0.286	&0.892	&0.410\\
TSH-ExpH	&0.172	&\bf 0.844	&0.282	&0.287	&0.892	&0.441	&0.286	&0.887	&0.410\\
\hline
STHs	&0.094	&0.385	&0.144	&0.162	&0.639	&0.229	&0.156	&0.634	&0.218\\
STHs-RBF	&0.151	&0.674	&0.178	&0.274	&\bf 0.897	&0.354	&0.271	&\bf  0.893	&0.352\\
KSH	&0.165	&0.781	&0.249	&0.279	&0.884	&0.407	&0.158	&0.881	&0.398\\
BREs	&0.106	&0.409	&0.151	&0.178	&0.703	&0.226	&0.171	&0.702	&0.210\\
MLH	&0.100	&0.470	&0.150	&0.181	&0.648	&0.264	&0.174	&0.623	&0.215\\
SPLH	&0.093	&0.452	&0.191	&0.168	&0.714	&0.321	&0.158	&0.708	&0.315\\
ITQ-CCA	&0.077	&0.619	&0.206	&0.143	&0.792	&0.333	&0.133	&0.784	&0.325\\
\hline
MDSH	&0.100	&0.298	&0.150	&0.178	&0.691	&0.288	&0.155	&0.685	&0.228\\
SHPER	&0.102	&0.296	&0.152	&0.185	&0.624	&0.244	&0.176	&0.623	&0.233\\
ITQ	&0.116	&0.386	&0.161	&0.206	&0.750	&0.264	&0.197	&0.751	&0.252\\
AGH	&0.096	&0.404	&0.144	&0.194	&0.743	&0.252	&0.187	&0.744	&0.244\\
STH	&0.077	&0.361	&0.135	&0.135	&0.593	&0.216	&0.125	&0.644	&0.204\\
BRE	&0.091	&0.323	&0.137	&0.160	&0.651	&0.238	&0.147	&0.582	&0.185\\
LSH	&0.069	&0.211	&0.123	&0.116	&0.459	&0.188	&0.103	&0.448	&0.162\\

\hline
  \end{tabular}
  }

\label{tab:rank}
\end{table*}

\begin{table}
\caption{Training time (in seconds) for  TSH using
different loss functions, and several other supervised methods on 3
datasets. The value inside a brackets is the time used in the first step for
inferring the binary codes. The results show that our method is efficient.
Note that the second step of learning the hash functions can be easily parallelised.
 }
\centering
%\resizebox{0.8\linewidth}{!}
  {
  \begin{tabular}{ r | c c c c }
  \hline \hline
	&LABELME	&MNIST	&CIFAR10\\ \hline
TSH-KSH	&198 (107)	&341 (294)	&326 (262)\\
TSH-BRE	&133 (33)	&309 (264)	&234 (175)\\
TSH-EE	&124 (29)	&302 (249)	&287 (225)\\
TSH-ExpH	&128 (43)	&334 (281)	&344 (256)\\
\hline
STHs-RBF	&133	&99	&95\\
KSH	&326	&355	&379\\
BREs	&216	&615	&231\\
MLH	&670	&805	&658\\
\hline
  \end{tabular}
  }

\label{tab:trn_time}
\end{table}

\section{Experiments}

We compare with a few state-of-the-art hashing methods, including 6
(semi-)supervised methods: Supervised Hashing with Kernels (KSH)
\cite{KSH}, Iterative Quantization with supervised embedding (ITQ-CCA)
\cite{gong2012iterative}, Minimal Loss Hashing (MLH)
\cite{norouzi2011minimal}, Supervised Binary Reconstructive Embeddings
(BREs) \cite{kulis2009learning} and its unsupervised version BRE,
Supervised Self-Taught Hashing (STHs) \cite{zhang2010self} and its
unsupervised version STH, Semi-supervised sequential Projection
Learning Hashing(SPLH) \cite{wang2010semi}, and 7 unsupervised
methods: Locality-Sensitive Hashing (LSH) \cite{Gionis1999}, Iterative
Quantization (ITQ) \cite{gong2012iterative}, Anchor Graph Hashing
(AGH) \cite{liu2011hashingGraphs}, Spectral Hashing (SPH
\cite{weiss2008spectral}), Spherical Hashing (SPHER) \cite{jae2012},
Multi-dimension Spectral Hashing (MDSH) \cite{MDSH}, Kernelized
Locality-Sensitive Hashing KLSH \cite{KLSH}.  For comparison methods,
we  follow the original papers for parameter
setting. For SPLH, the regularization trade-off parameter is picked
from $0.01$ to $1$.  We use the hierarchical variant of AGH. For each
dataset, the bandwidth parameters of Gaussian affinity in MDSH and RBF
kernel in KLSH, KSH and our method \tsh is set as $\sigma=t\bar{d}$.
Here $\bar{d}$ is the average Euclidean distance of top $100$ nearing
neighbours and $t$ is picked from $0.01$ to $50$. For STHs and our
method \tsh, the trade-off parameter in SVM is picked from $10/n$ to
$10^{5}/n$, $n$ is the number of data points. For our \tsh-EE using EE
lost function, we simply set the trade-off parameter $\lambda$ to 100.
If not specified, our method \tsh use SVM with RBF kernel as hash
functions. The cyclic iteration number $r$ in our method is simply set to $1$.

We use 2 large scale image datasets and another 3 datasets for
evaluation. 2 large image datasets are $580,000$ tiny image dataset
(Tiny-580K) \cite{gong2012iterative}, and Flickr 1 Million image
dataset. %
Another 3 datasets include
CIFAR10, %
MNIST and LabelMe \cite{norouzi2011minimal}.
For the LabelMe dataset, the ground truth pairwise affinity matrix is provided.
For other datasets,
we use the multi-class labels to define the ground truth affinity by label agreement.

Tiny-580K is used in
\cite{gong2012iterative}.
Flick-1M dataset consists of 1 million thumbnail images
of the MIRFlickr-1M
We generate 320-dimension GIST features.

For these 2 large datasets, there is no semantic ground truth affinity
provided.  Following the same setting as other hash methods
\cite{wang2010semi, KSH}, we generate pseudo-labels for supervised
methods according to the $\ell_2$ distance. In detail, a data point
is labelled as a relevant neighbour to the query if it lies in the top
2 percentile points of the whole database.  For all datasets,
following a common setting in many supervised hashing methods
\cite{norouzi2011minimal, kulis2009learning, KSH}, we randomly select
2000 examples as testing queries, and the rest is served as database.
We use 2000 examples in the database for training. We use 4 types of
evaluation measures: Precision-at-K, Mean Average Precision (MAP),
Precision-Recall, Precision within Hamming distance 2.

\begin{figure}
    \centering

        \figcenter{\includegraphics[height=0.403in]{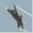}}
        \colseperator
        \figcenter{\includegraphics[height=1.0956in]{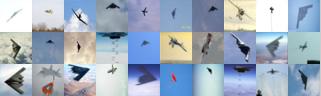}}

        \figcenter{\includegraphics[height=0.403in]{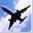}}
        \colseperator
        \figcenter{\includegraphics[height=1.0956in]{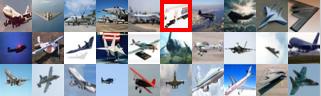}}

        \figcenter{\includegraphics[height=0.403in]{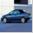}}
        \colseperator
        \figcenter{\includegraphics[height=1.0956in]{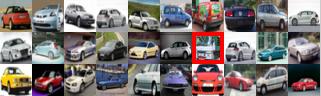}}

    \caption{Some retrieval examples of our method \tsh on CIFAR10. The first column shows query images, and the rest are top 30 retrieved images in the database. False predictions are marked by red boxes.
    }
   	\label{fig:examples}
\end{figure}

\subsection{Using different loss functions}

We evaluate the performance of our method \tsh using different loss functions on 3 datasets: LabelMe, MNIST, CIFAR10.
3 types of evaluation measures are used here: Precision-at-K, Mean
Average Precision (MAP) and the area under the Precision-Recall curve.
The loss function is defined in Section \ref{sec:bqp}. In particular,
our method
TSH-KSH uses the KSH \cite{KSH} loss function, TSH-BRE uses the BRE
\cite{kulis2009learning} function.
STHs-RBF is the STHs method using RBF kernel hash functions.
Our method also uses SVM with RBF kernel as hash functions.

First,  we evaluate the effectiveness of the Step-1 in our method.
We compare the quality of the generated binary codes on the {\em training
data points}.  The results are shown in the upper part of the table in
Table.\ \ref{tab:rank}.
The results show that our methods  generate high-quality binary
codes and outperform others by a large margin. In CIFAR10 and MNIST,
we are able to generate perfect codes that match the ground truth
similarity.  This demonstrates the effectiveness of coordinate descent
based hashing codes learning procedure (Step 1 of our framework).

Compared to STHs-RBF, even though we are using the same formate of hash function,
our overall objective function may be more effective and the
algorithm for code inference is more efficient. Thus our method
achieves better performance than STH.

The second part of the result in Table. \ref{tab:rank} shows the
testing performance. Our method also outperforms others in most cases.
Note that MNIST is a `simple' dataset and not as challenging as CIFAR10
and LabelMe. Thus many methods manage to achieve good performance.
In the challenging dataset CIFAR10 and LabelMe, our method
significantly outperforms others by a large margin.

Overall, %
for preserving the semantic similarity, supervised methods usually
perform much better than those unsupervised methods, which is
expected.
Our method performs
the best, and the running-up methods are STHs-RBF, KSH, and ITQ-CCA.

We show further results of using different numbers of bits in
Fig.\ \ref{fig:sup} and Fig.\ \ref{fig:unsup}
on the dataset CIFAR10 and LabelMe.
Our method still performs the best in most cases.
Some search examples are shown in Fig. \ref{fig:examples}.

\begin{figure*}
    \centering

   \includegraphics[width=.28\linewidth]{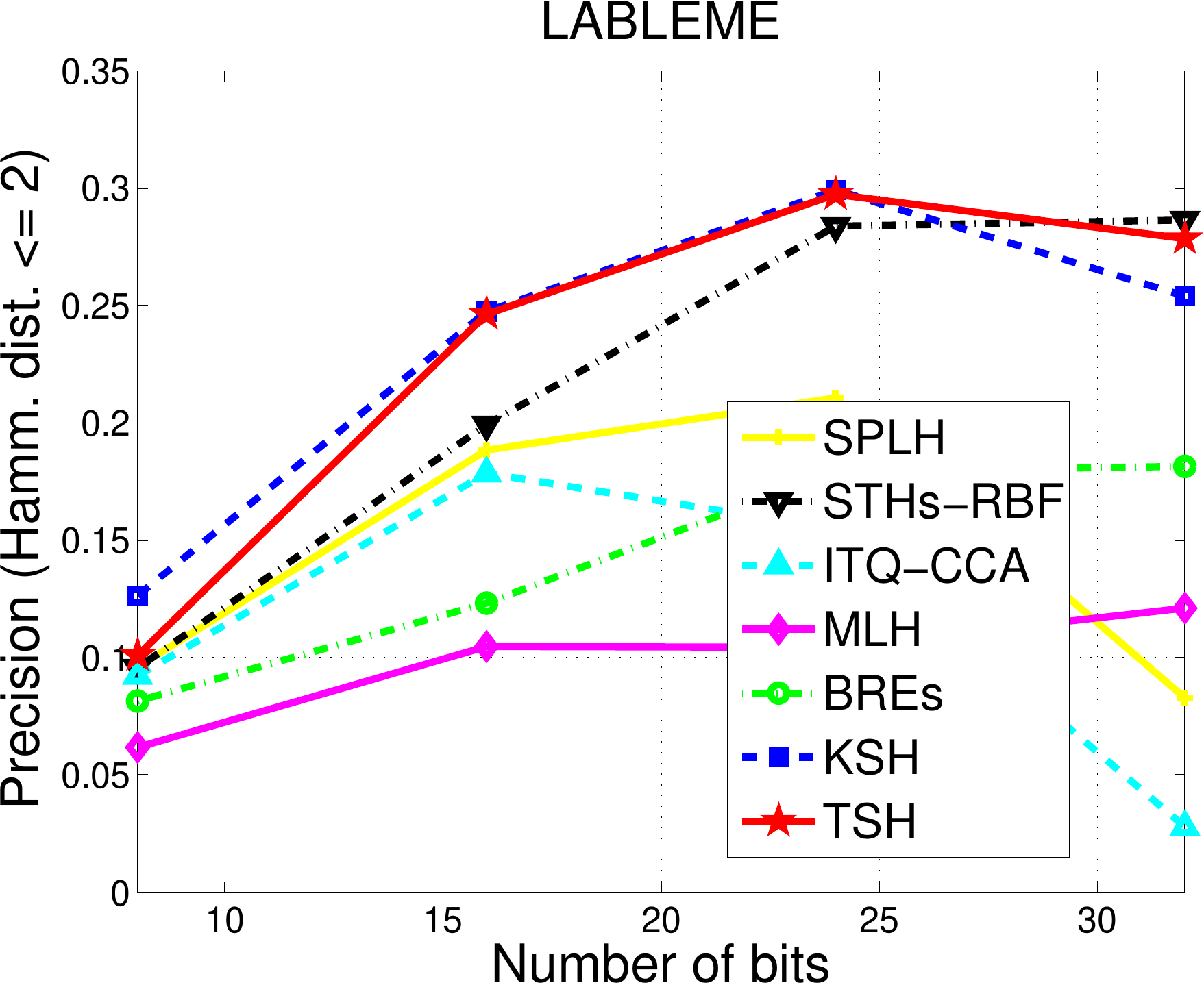}
   \includegraphics[width=.28\linewidth]{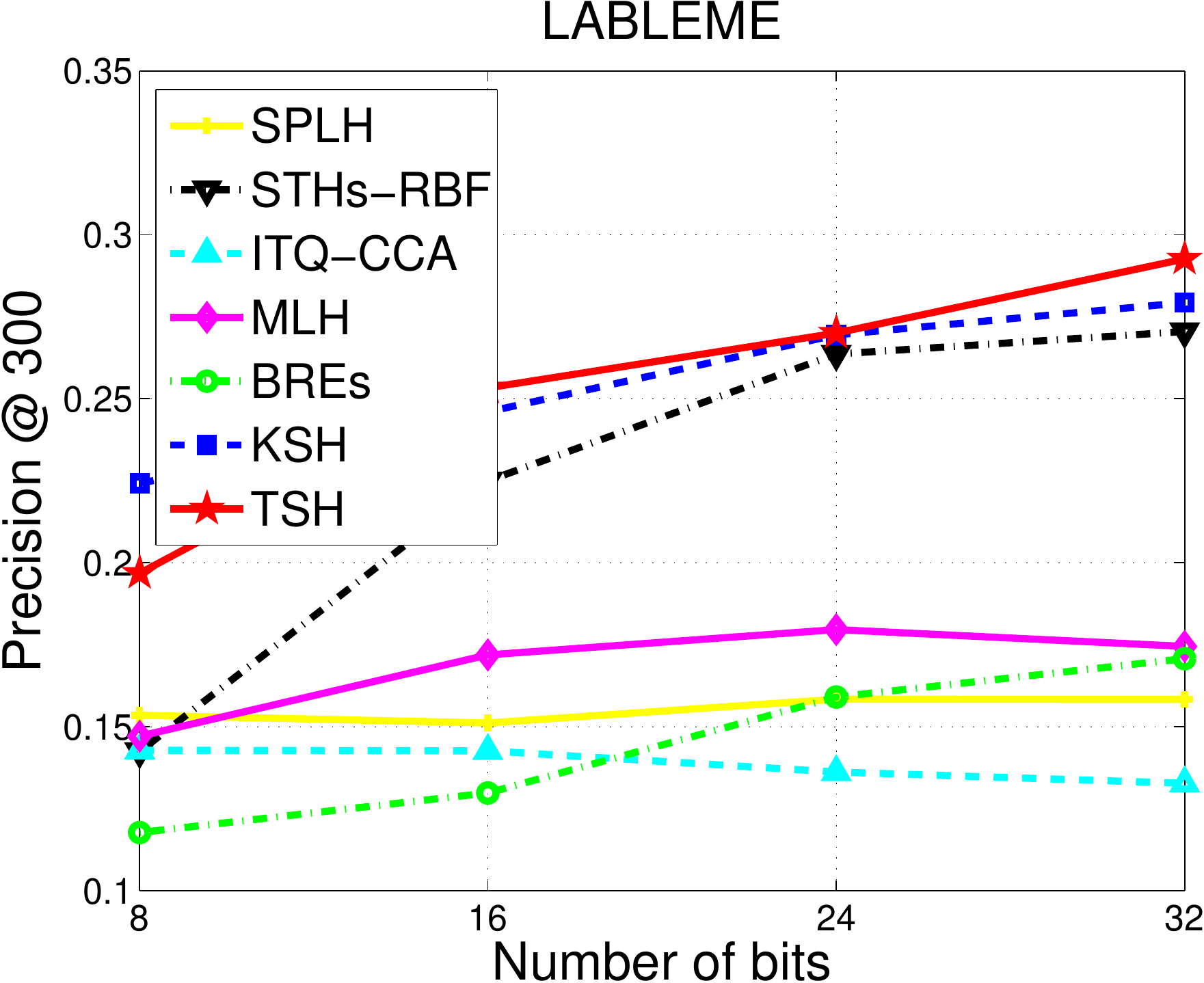}
   \includegraphics[width=.28\linewidth]{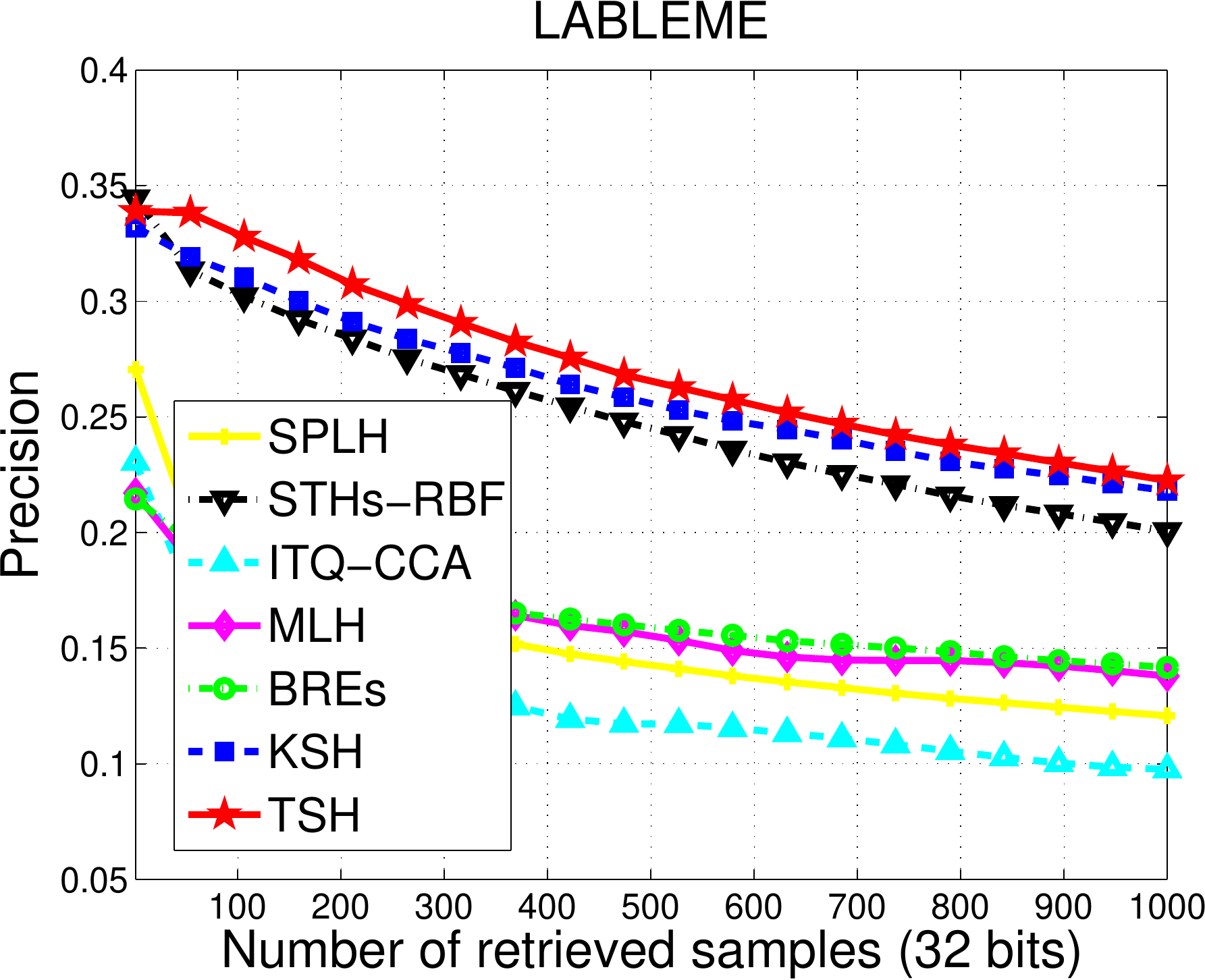}

   \includegraphics[width=.28\linewidth]{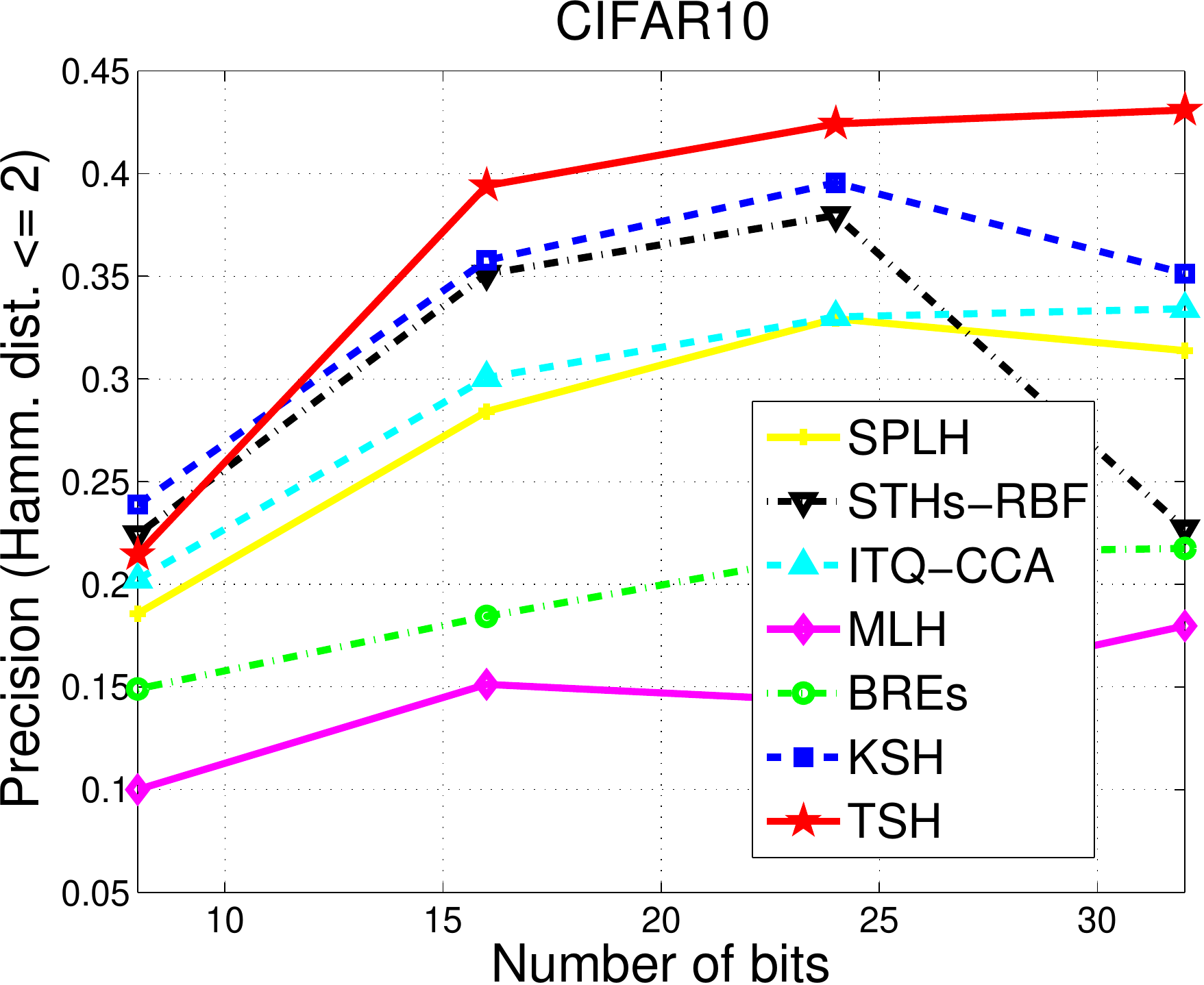}
   \includegraphics[width=.28\linewidth]{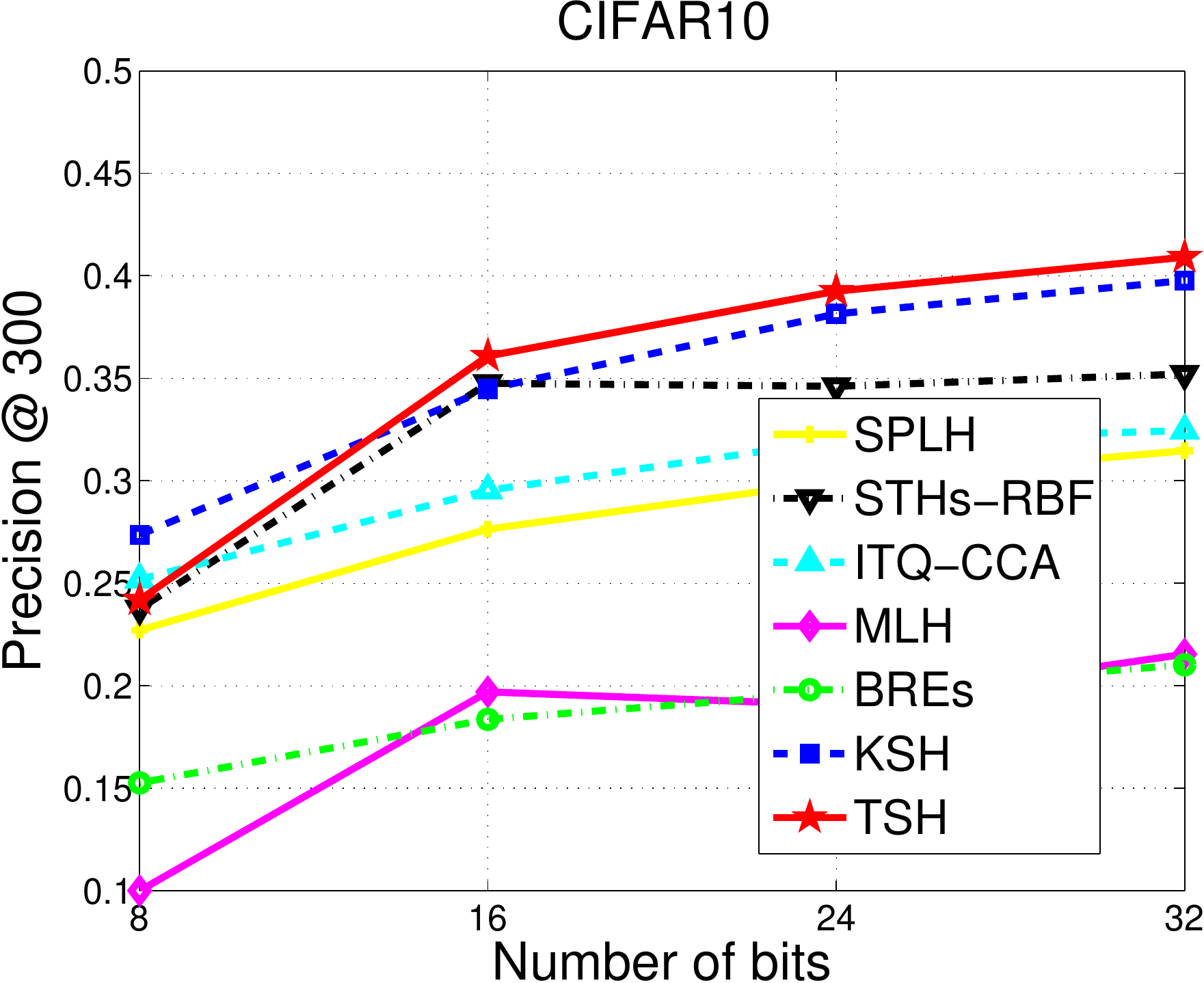}
   \includegraphics[width=.28\linewidth]{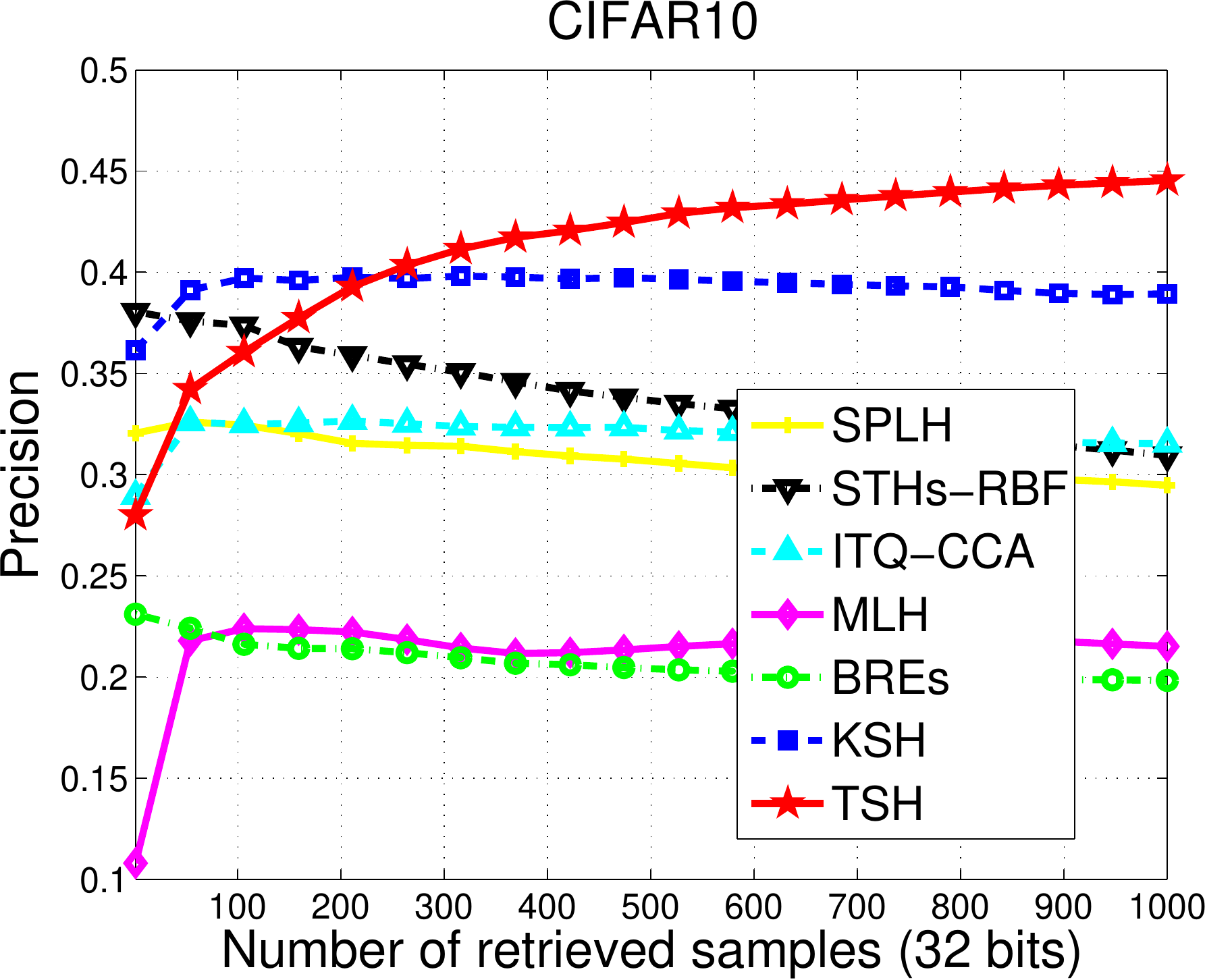}

      \caption{Results on 2 datasets of supervised methods. TSH denotes our method using BRE loss function. Results show that TSH outperforms others usually by a large margin. The runner-up methods are STHs-RBF and KSH.}
   	\label{fig:sup}
\end{figure*}

\begin{figure*}
    \centering

   \includegraphics[width=.28\linewidth]{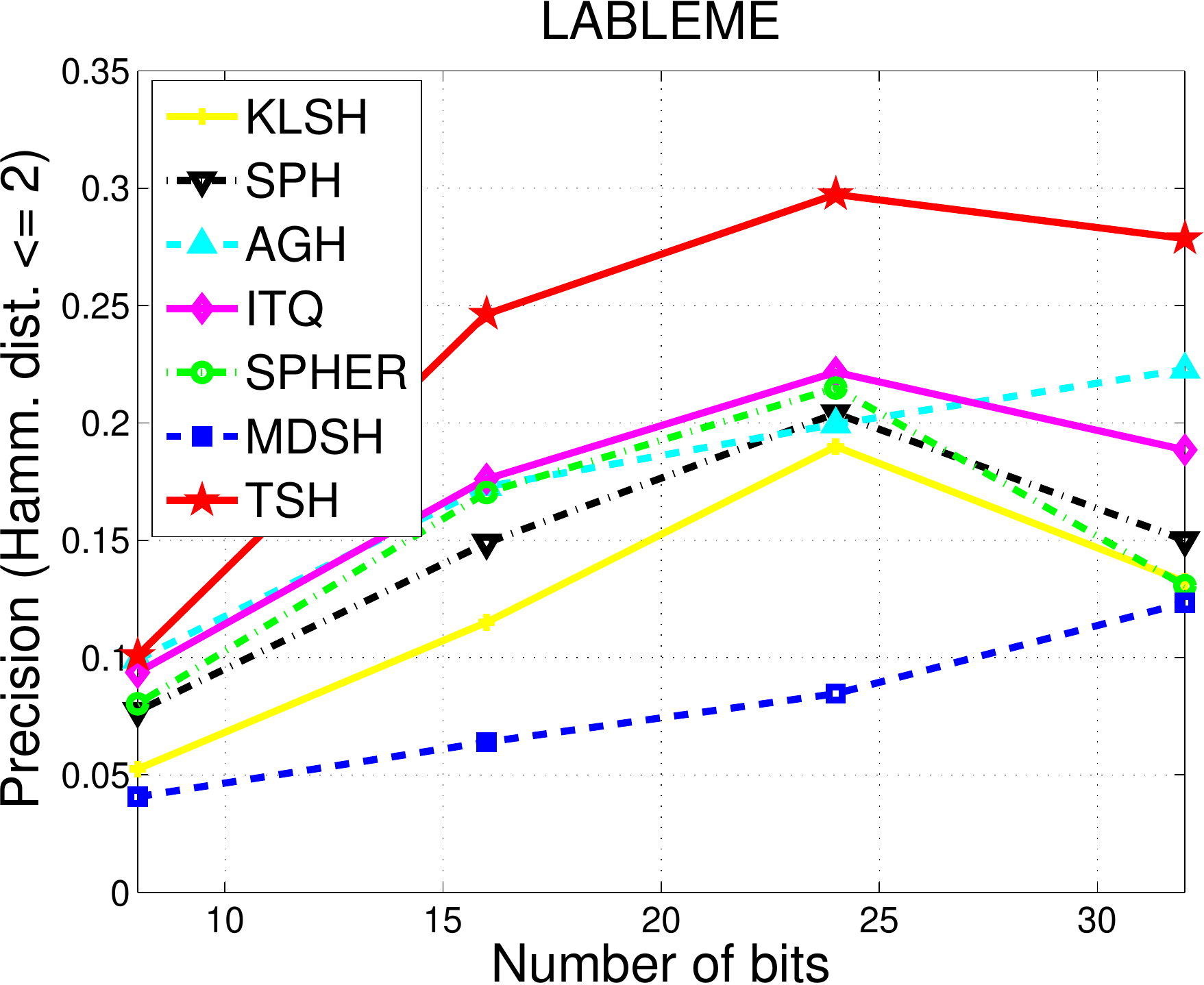}
   \includegraphics[width=.28\linewidth]{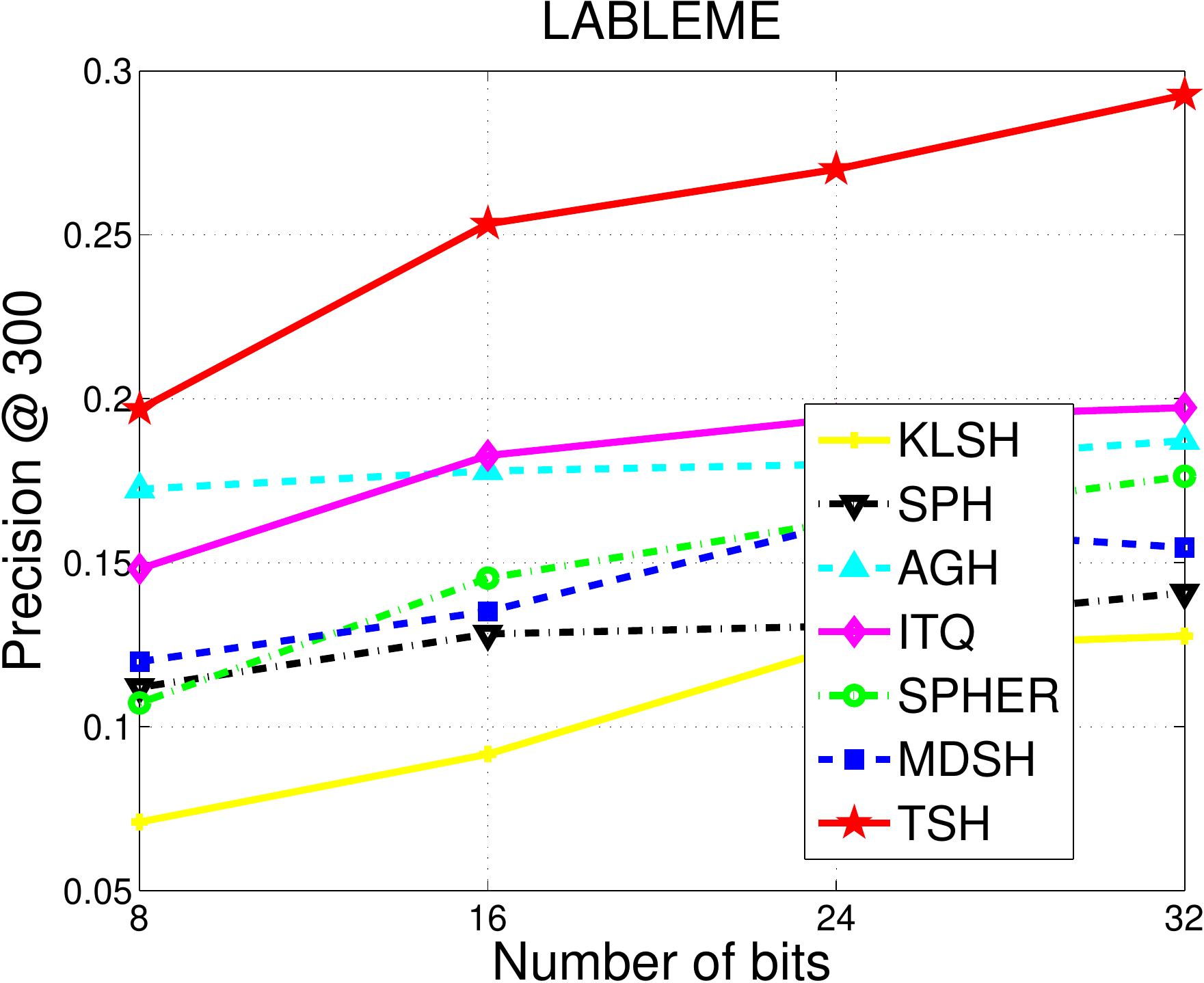}
   \includegraphics[width=.28\linewidth]{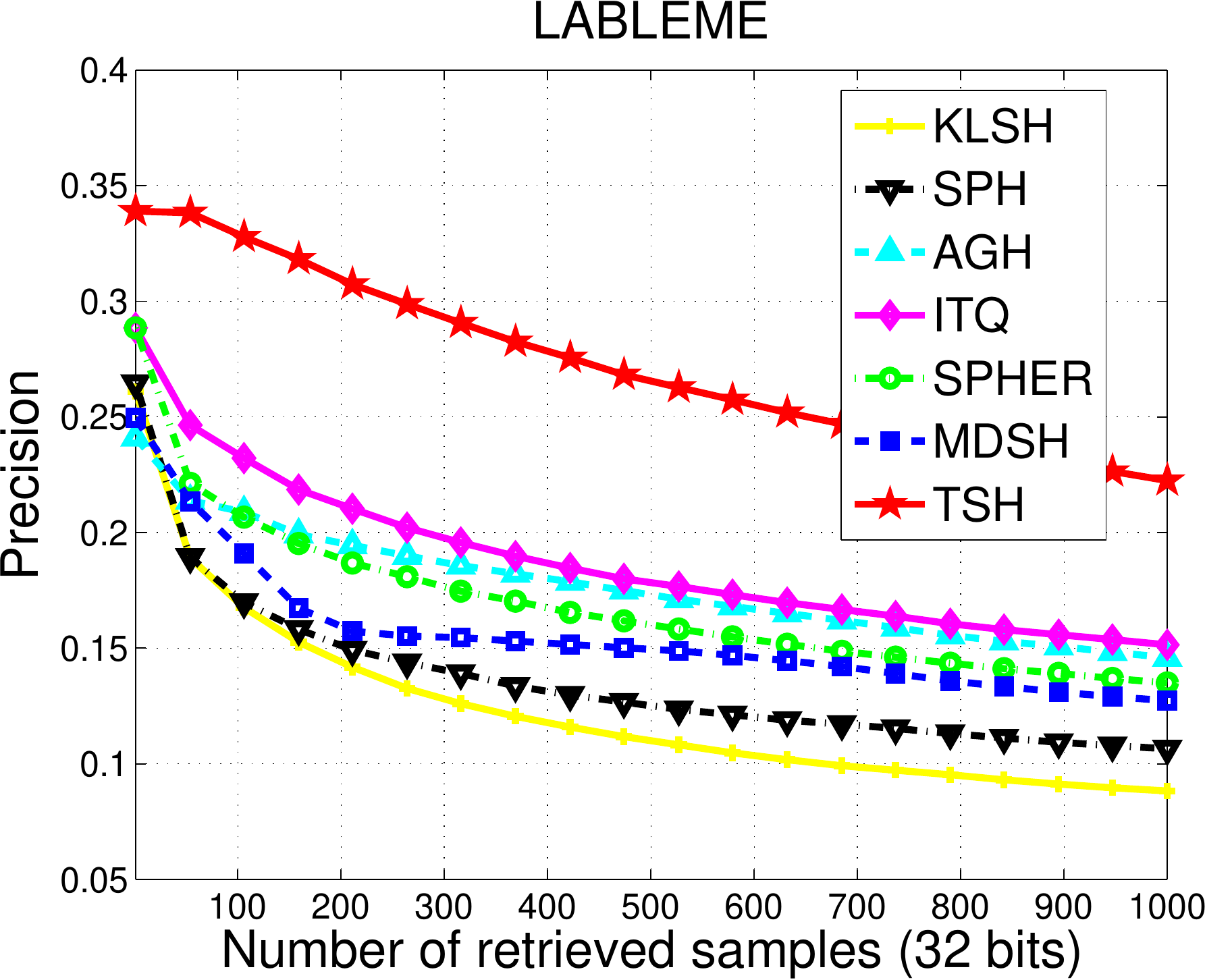}

   \includegraphics[width=.28\linewidth]{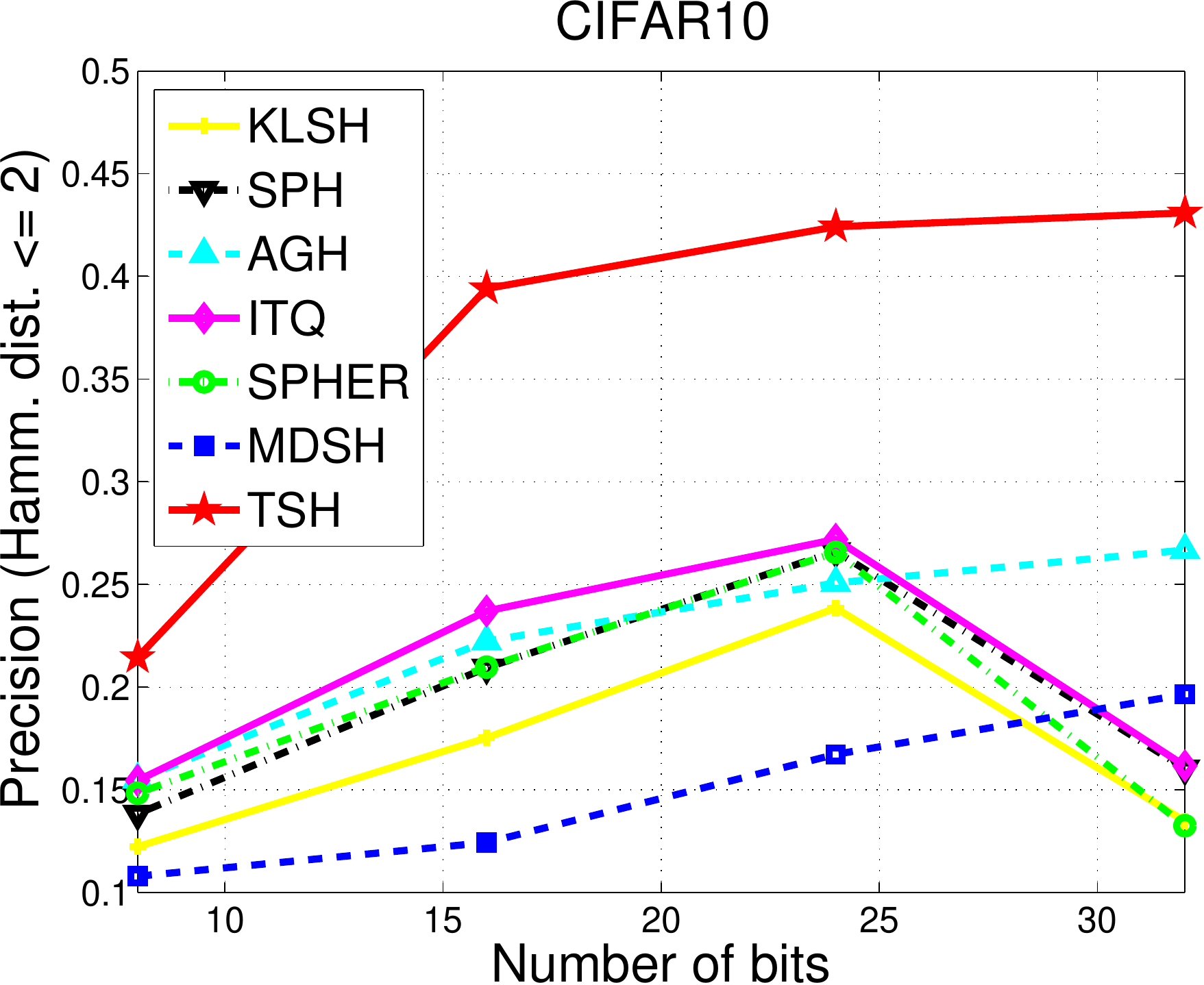}
   \includegraphics[width=.28\linewidth]{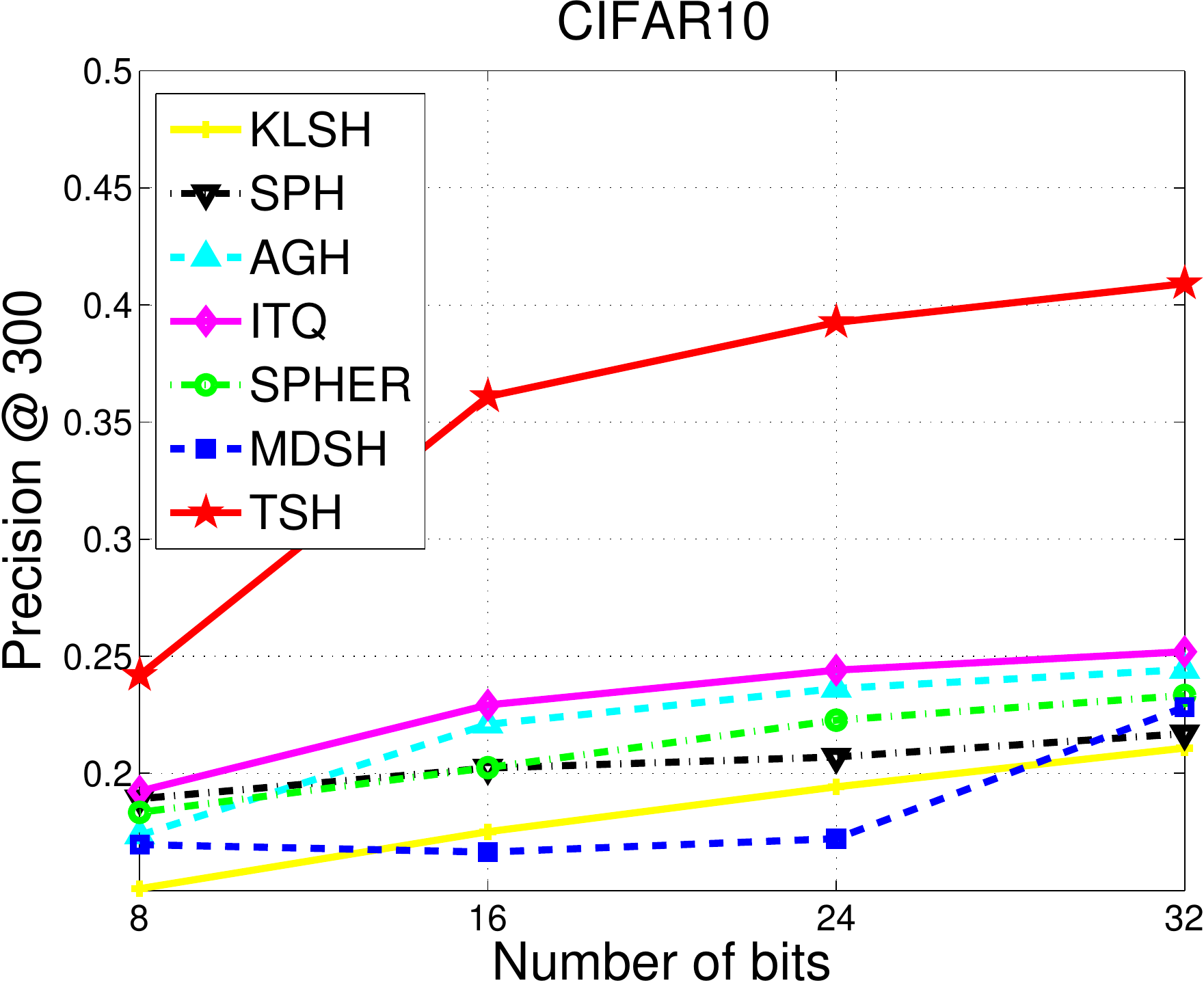}
   \includegraphics[width=.28\linewidth]{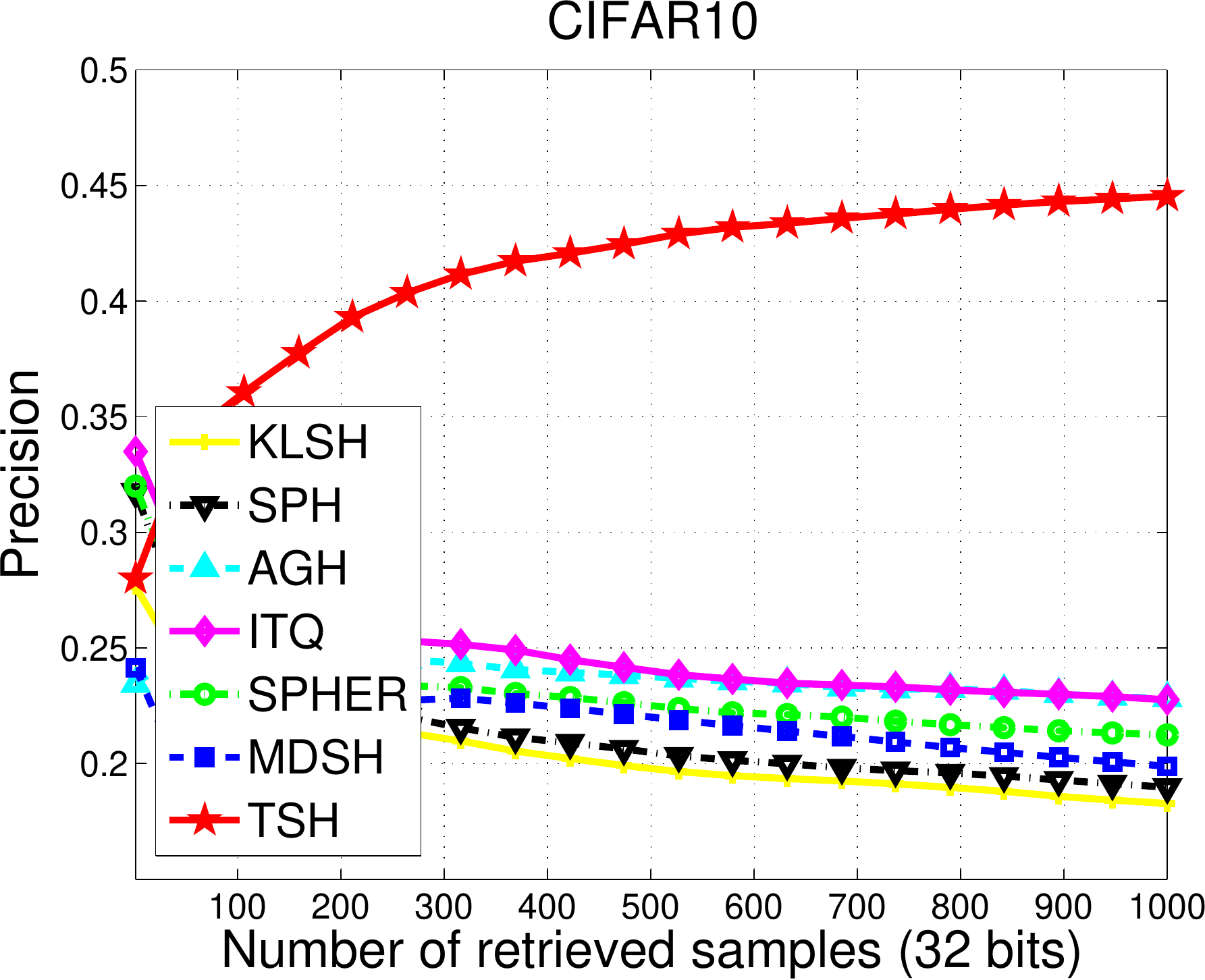}

      \caption{Results on 2 datasets for
      comparing  unsupervised methods.
      TSH denotes our method using the BRE loss function. Results show
      that TSH outperforms others usually by a large margin. The
      running-up methods are STHs-RBF and KSH.}
   	\label{fig:unsup}
\end{figure*}

\begin{figure*}
    \centering

   \includegraphics[width=.28\linewidth]{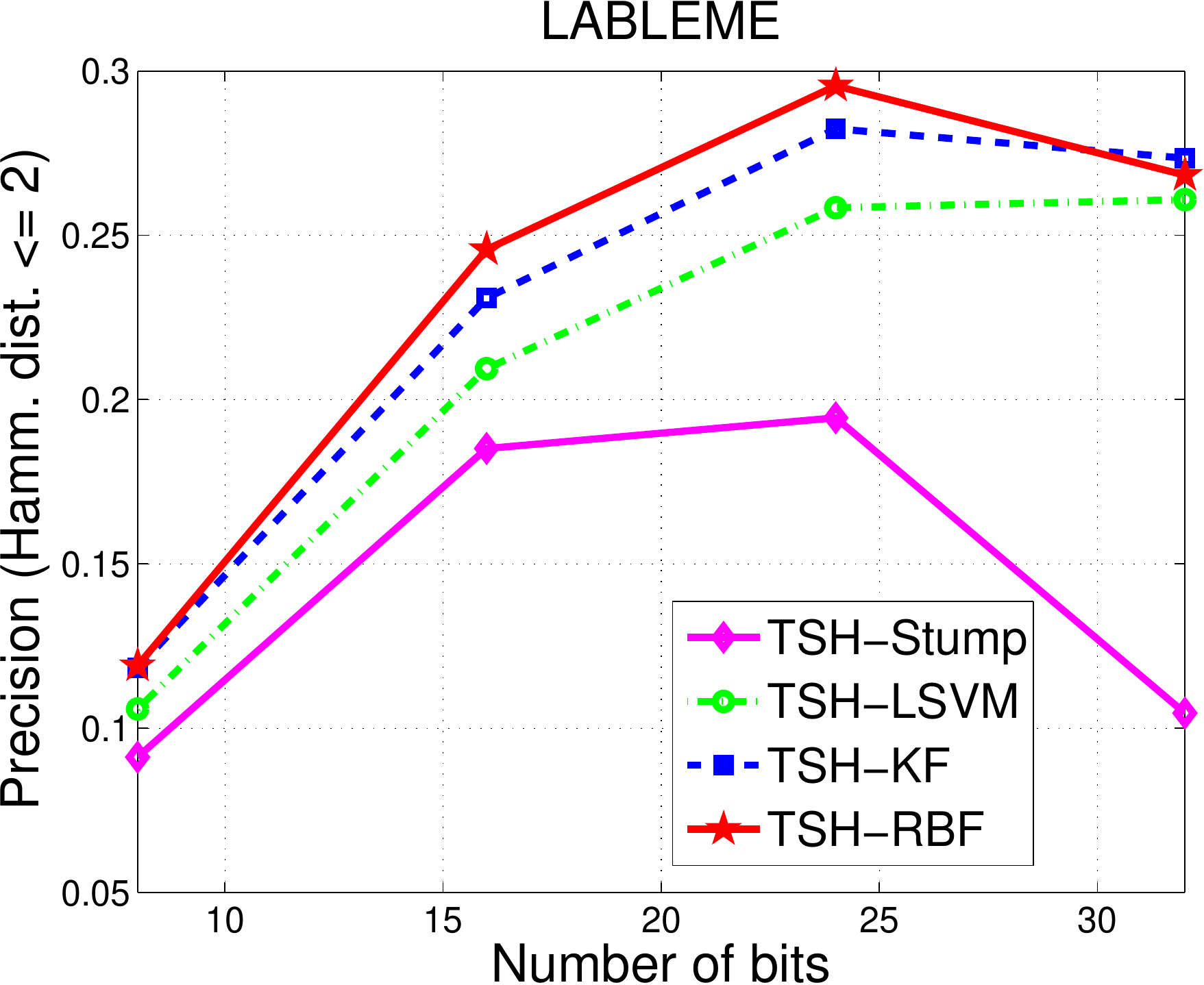}
   \includegraphics[width=.28\linewidth]{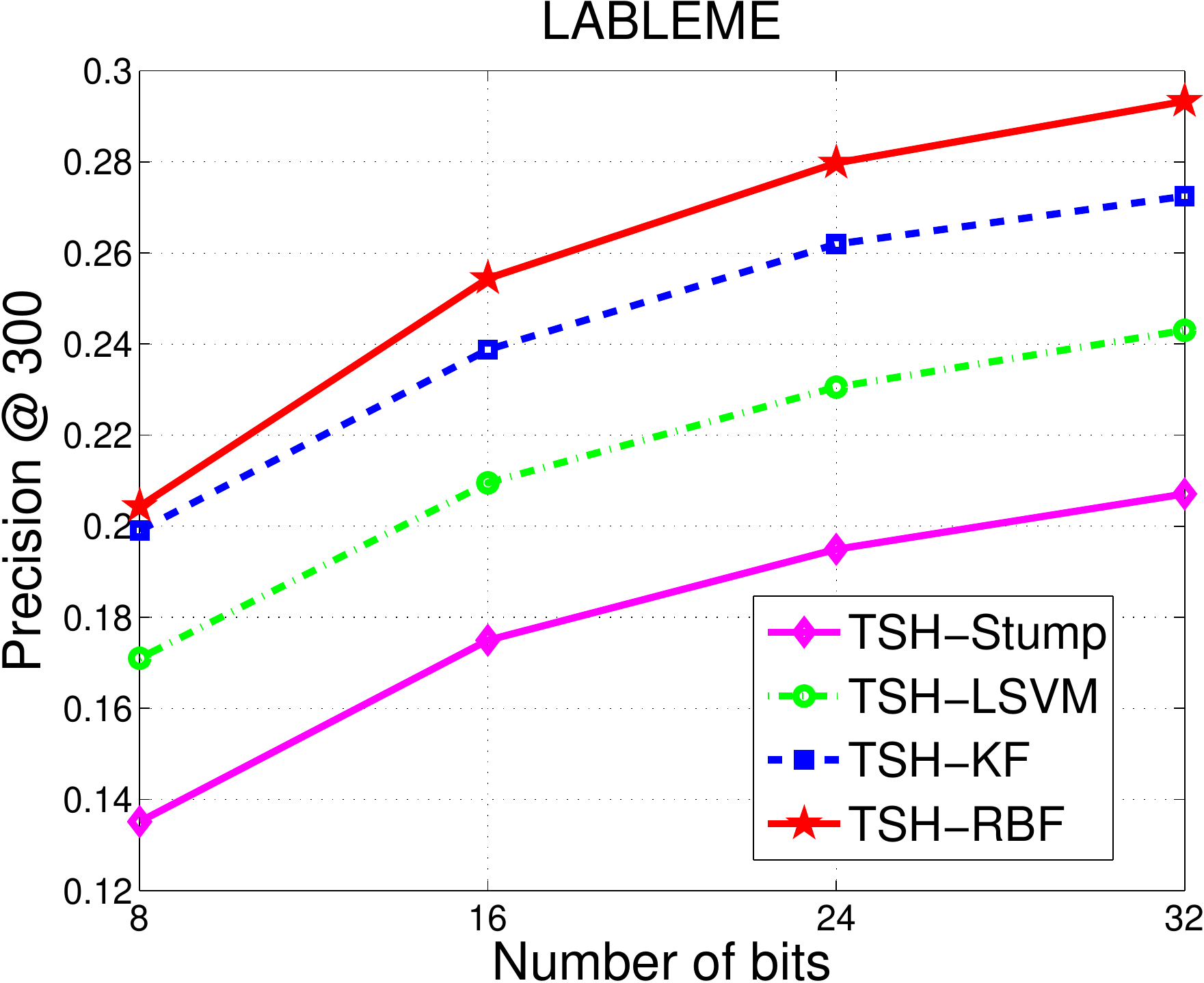}
   \includegraphics[width=.28\linewidth]{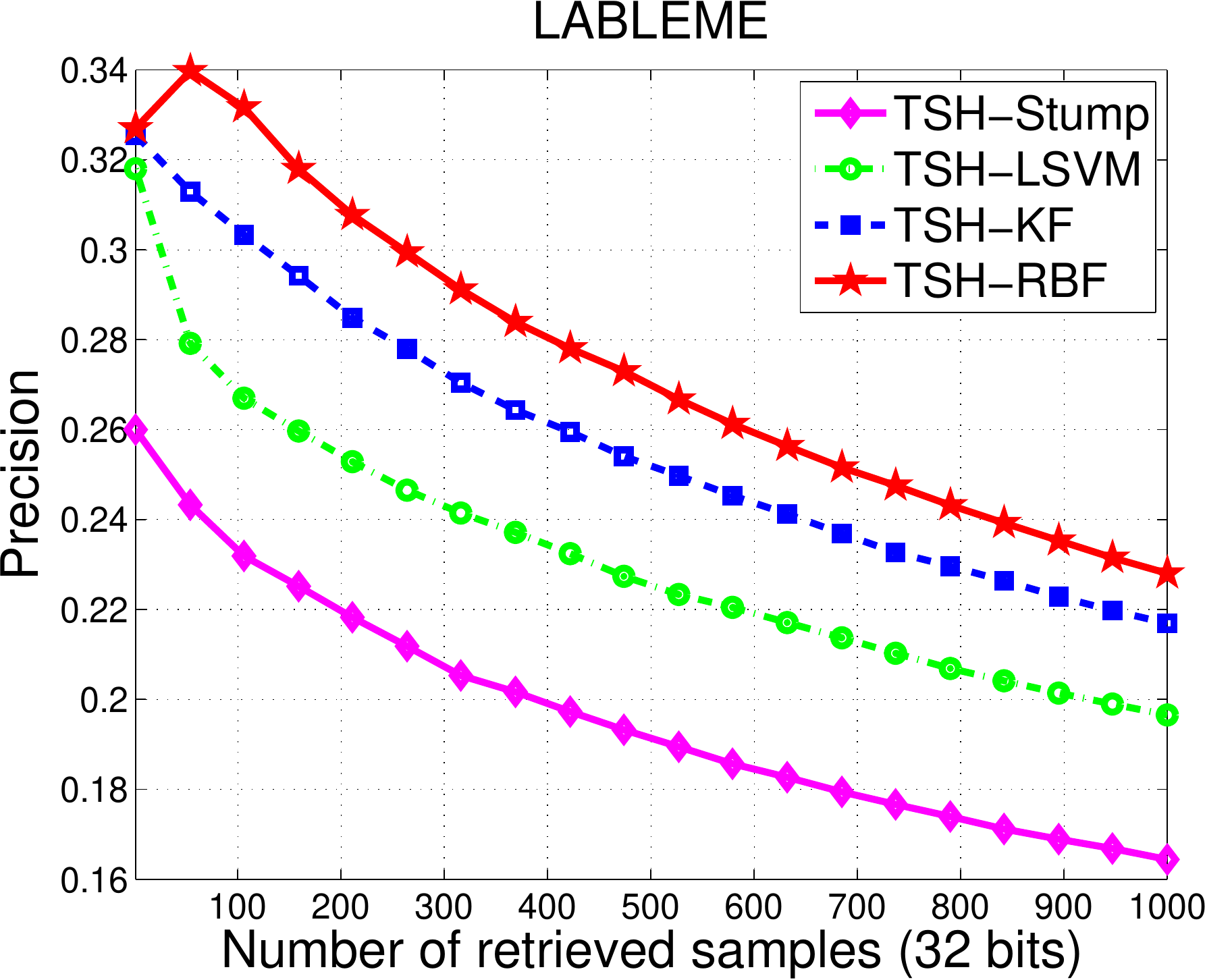}

   \includegraphics[width=.28\linewidth]{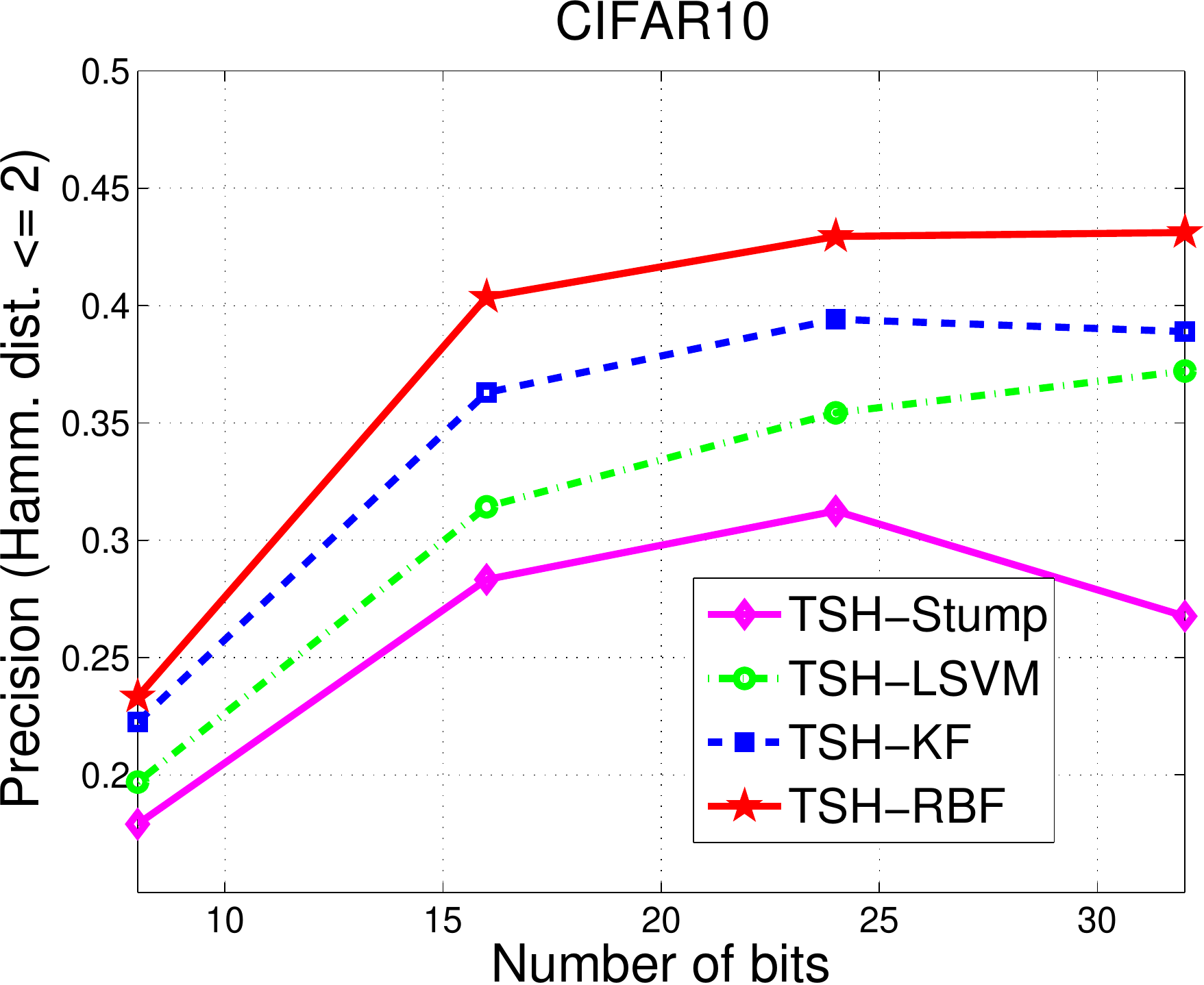}
   \includegraphics[width=.28\linewidth]{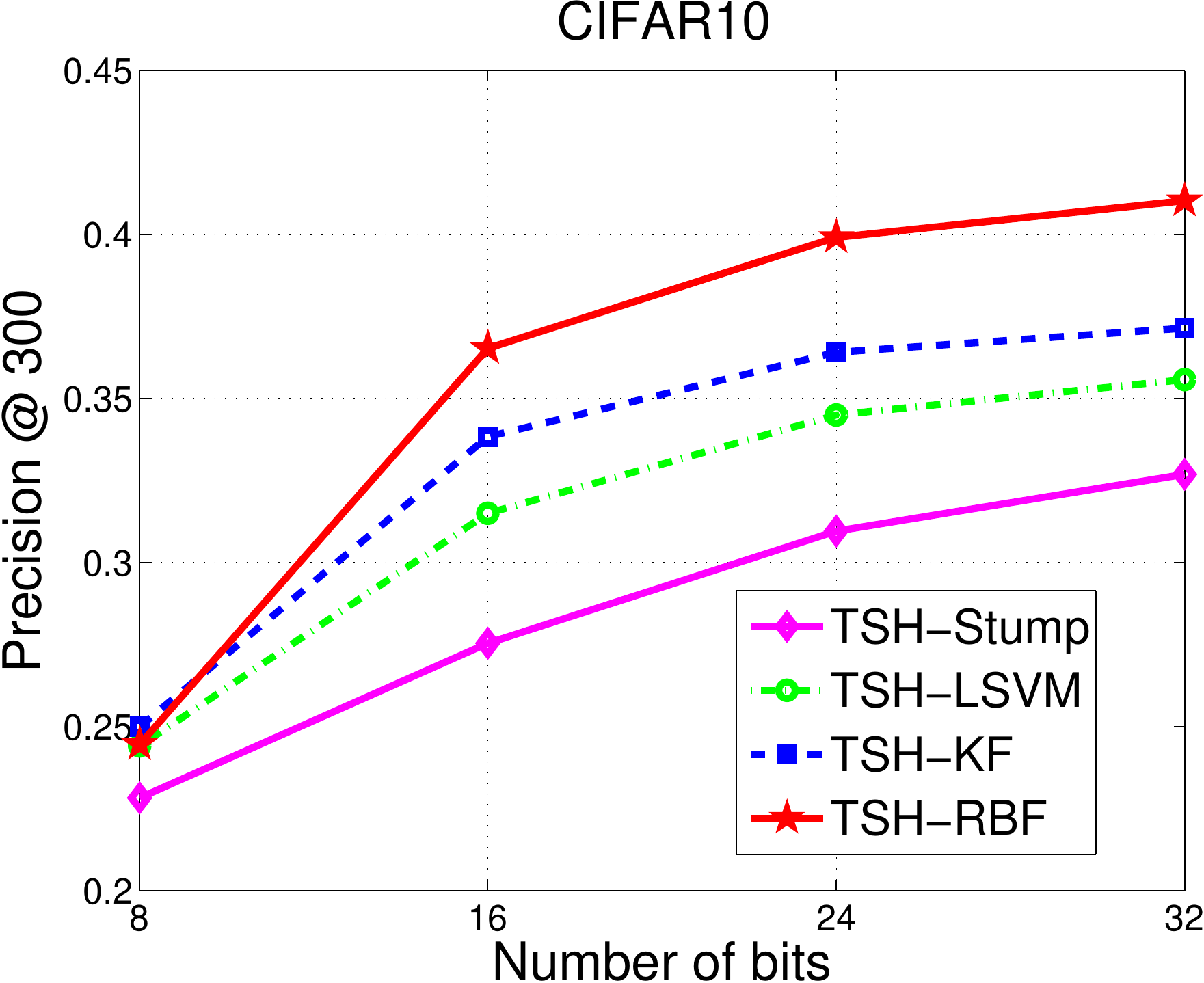}
   \includegraphics[width=.28\linewidth]{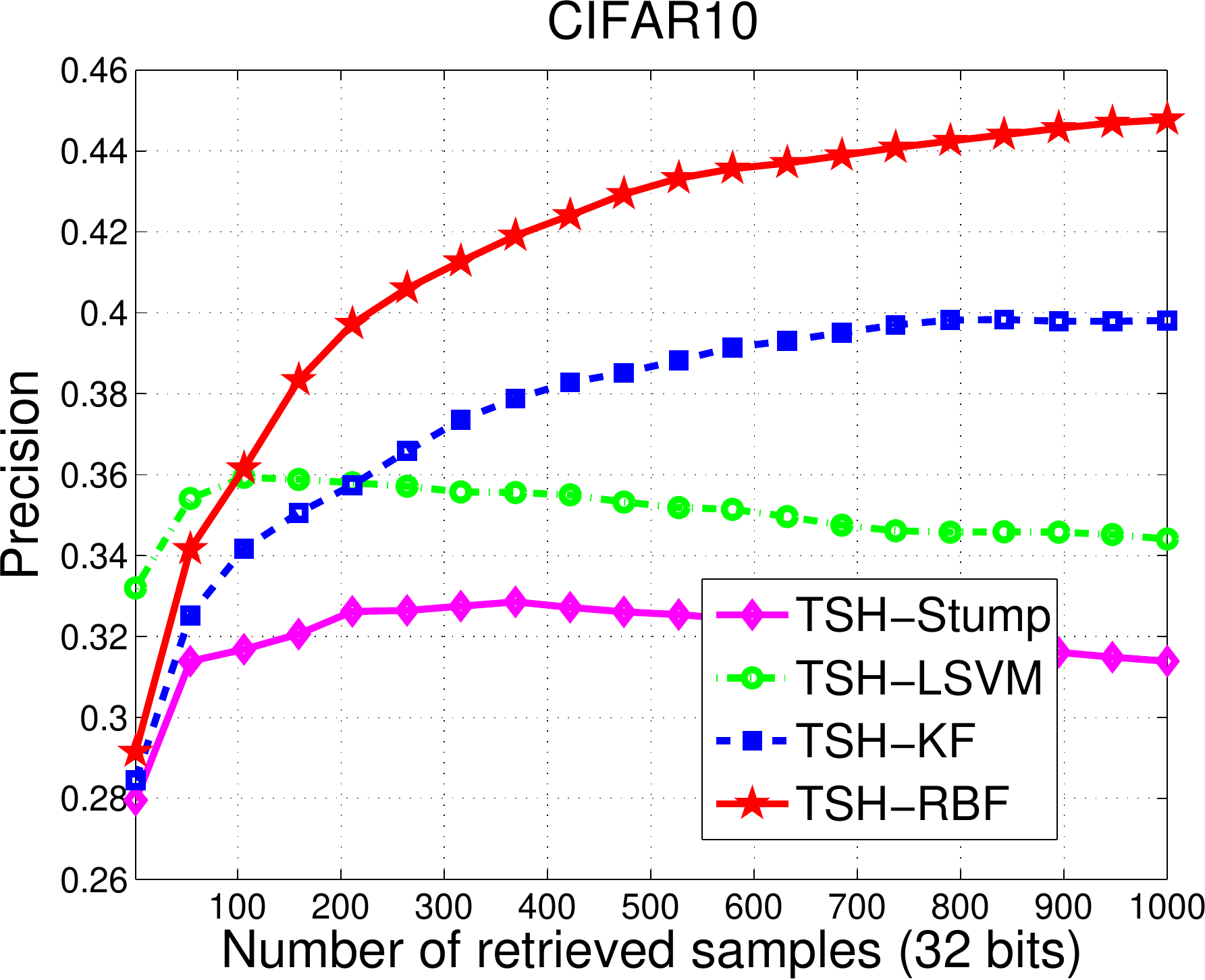}

    \caption{Results on 2 datasets of our method using different hash functions.
    Results show that using kernel hash function (TSH-RBF and TSH-KF) achieves best performances.}
   	\label{fig:wl}
\end{figure*}

\begin{figure}
    \centering

   \includegraphics[width=.349\linewidth]{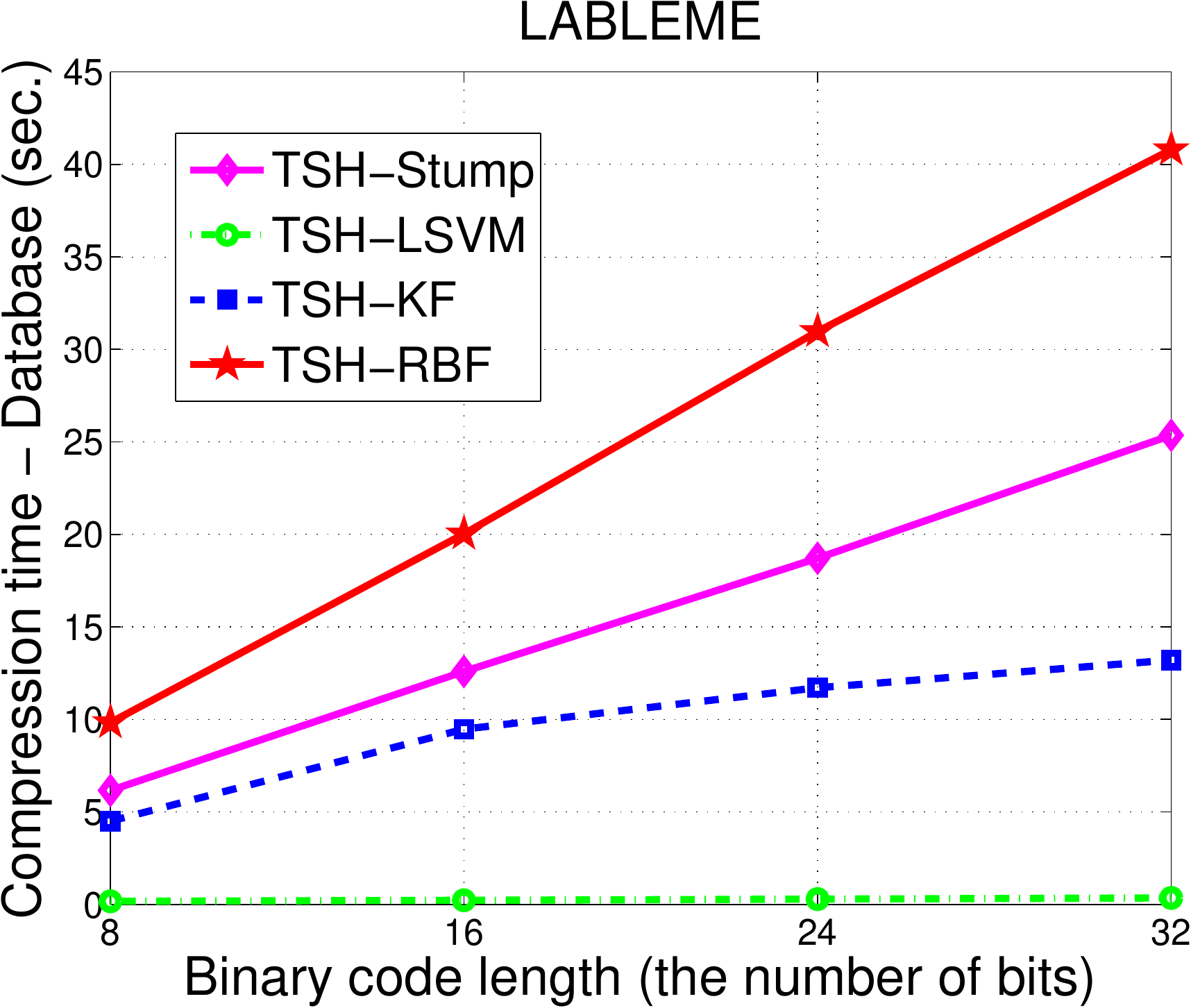}
   \includegraphics[width=.349\linewidth]{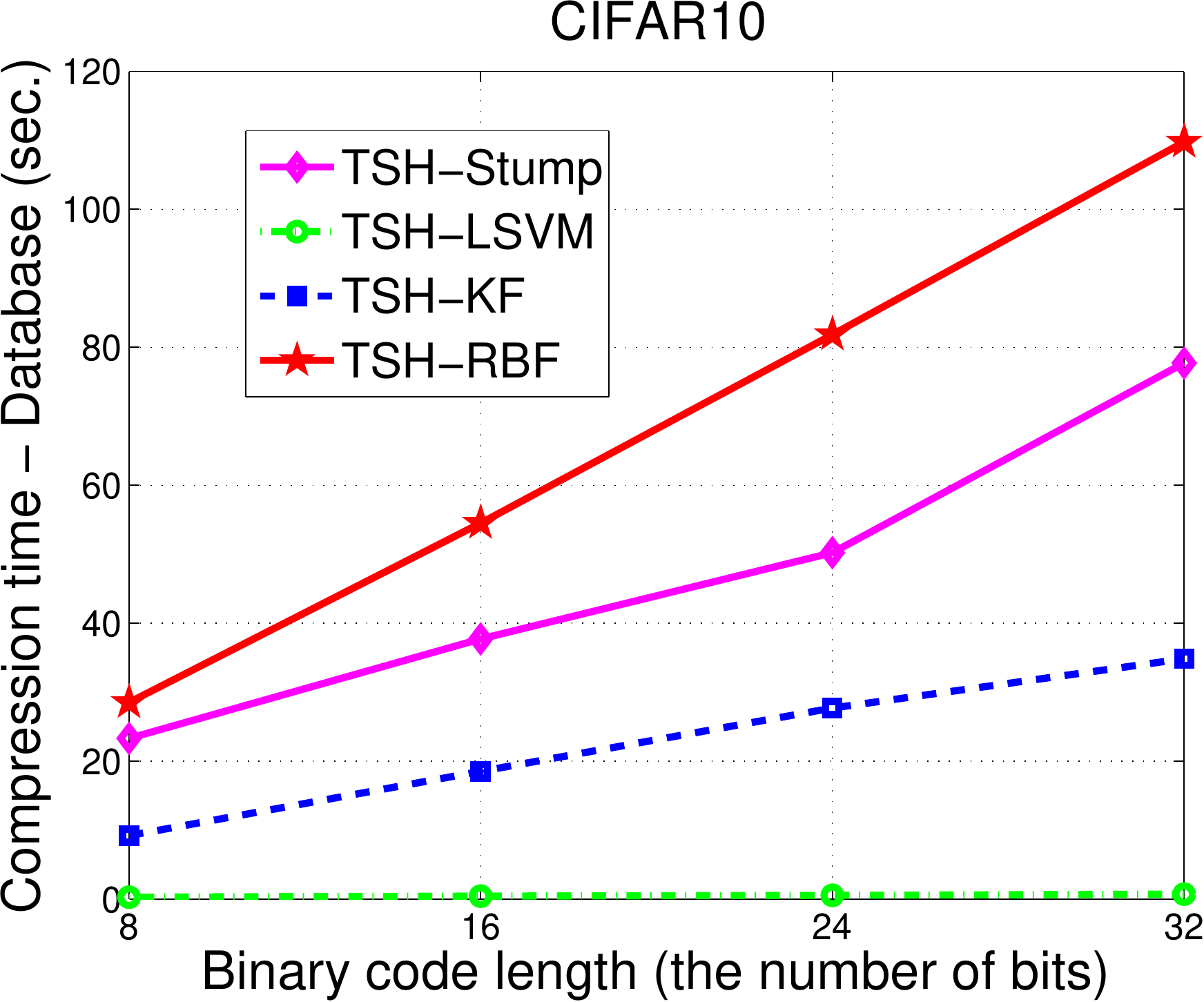}

    \caption{Code compression time  using
    different hash functions.
    Results show that using kernel transferred features (TSH-KF) is
    much faster than SVM with the RBF kernel (TSH-RBF). Linear SVM is the
    fastest one.
    }
   	\label{fig:wl_time}
\end{figure}

\begin{figure*}
    \centering

   \includegraphics[width=.3624\linewidth]{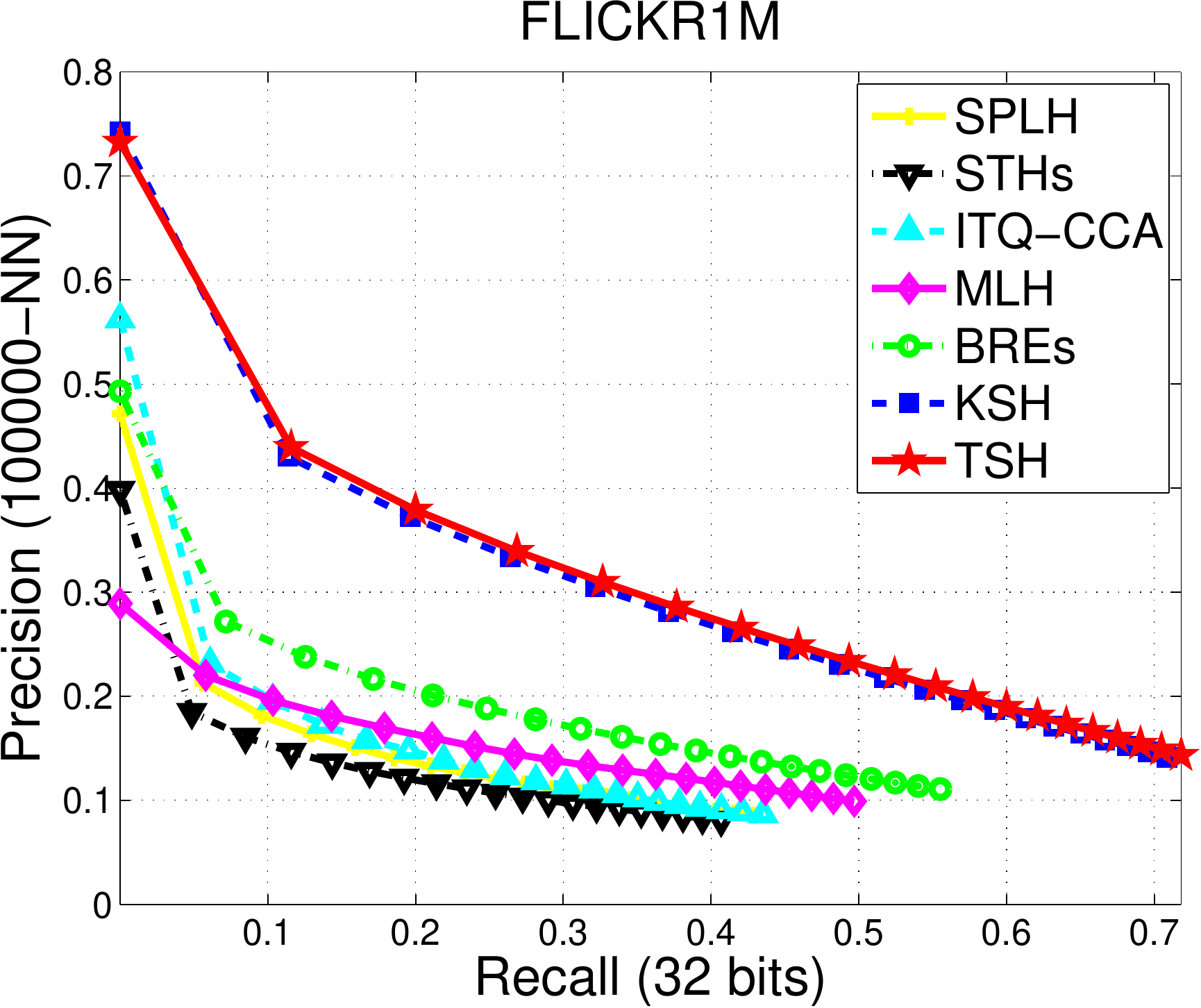}
   \includegraphics[width=.3624\linewidth]{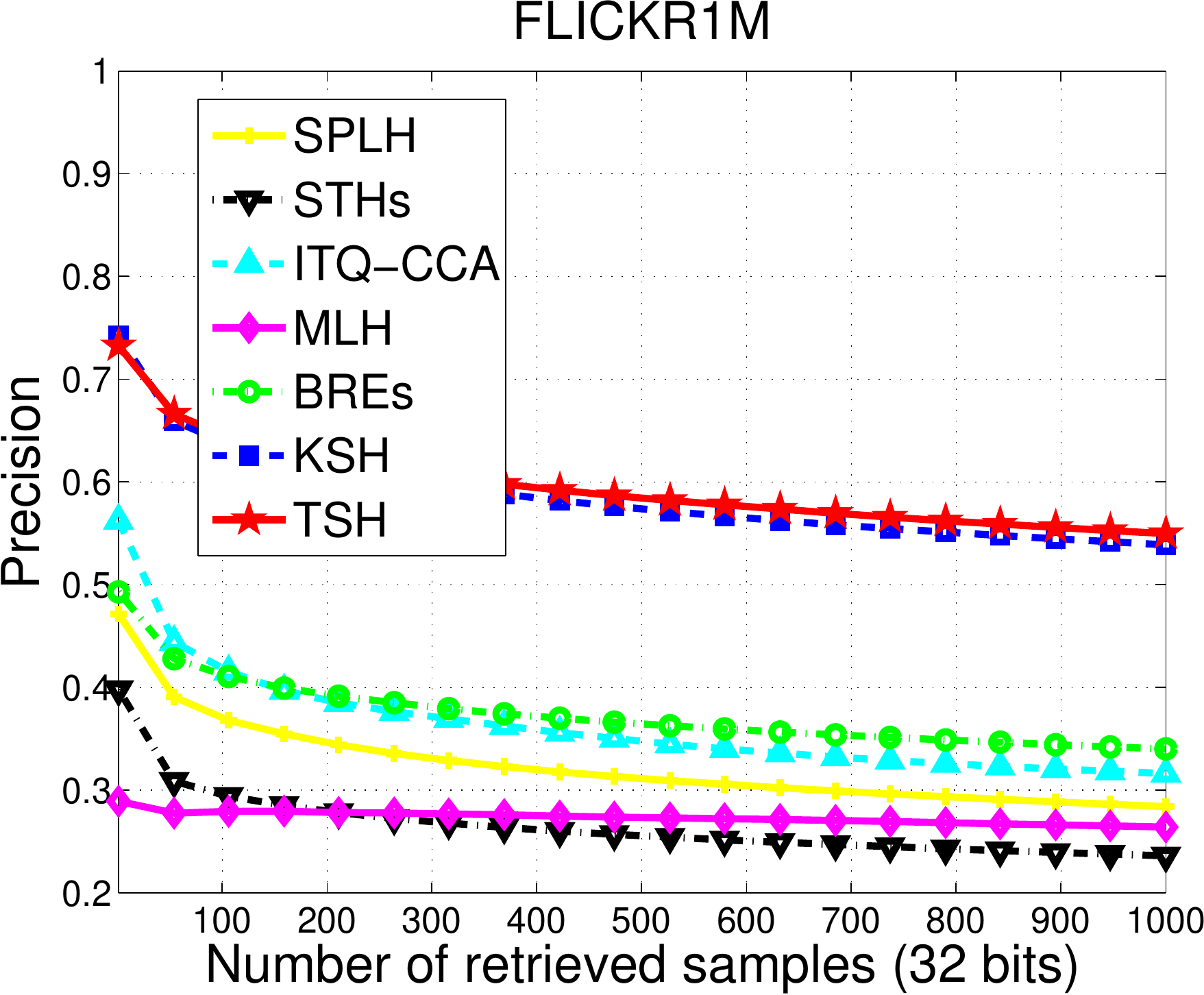}
   \includegraphics[width=.3624\linewidth]{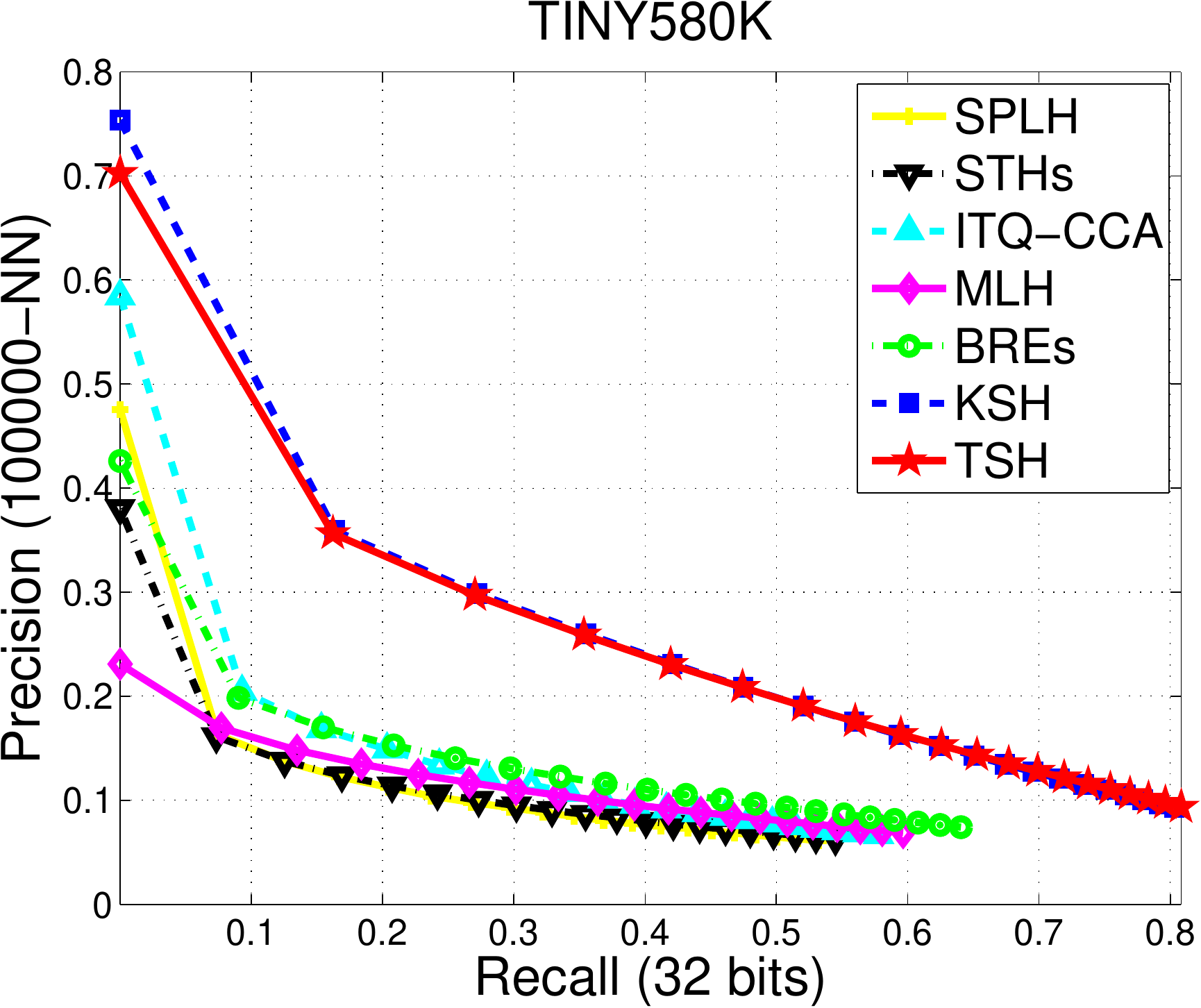}
   \includegraphics[width=.3624\linewidth]{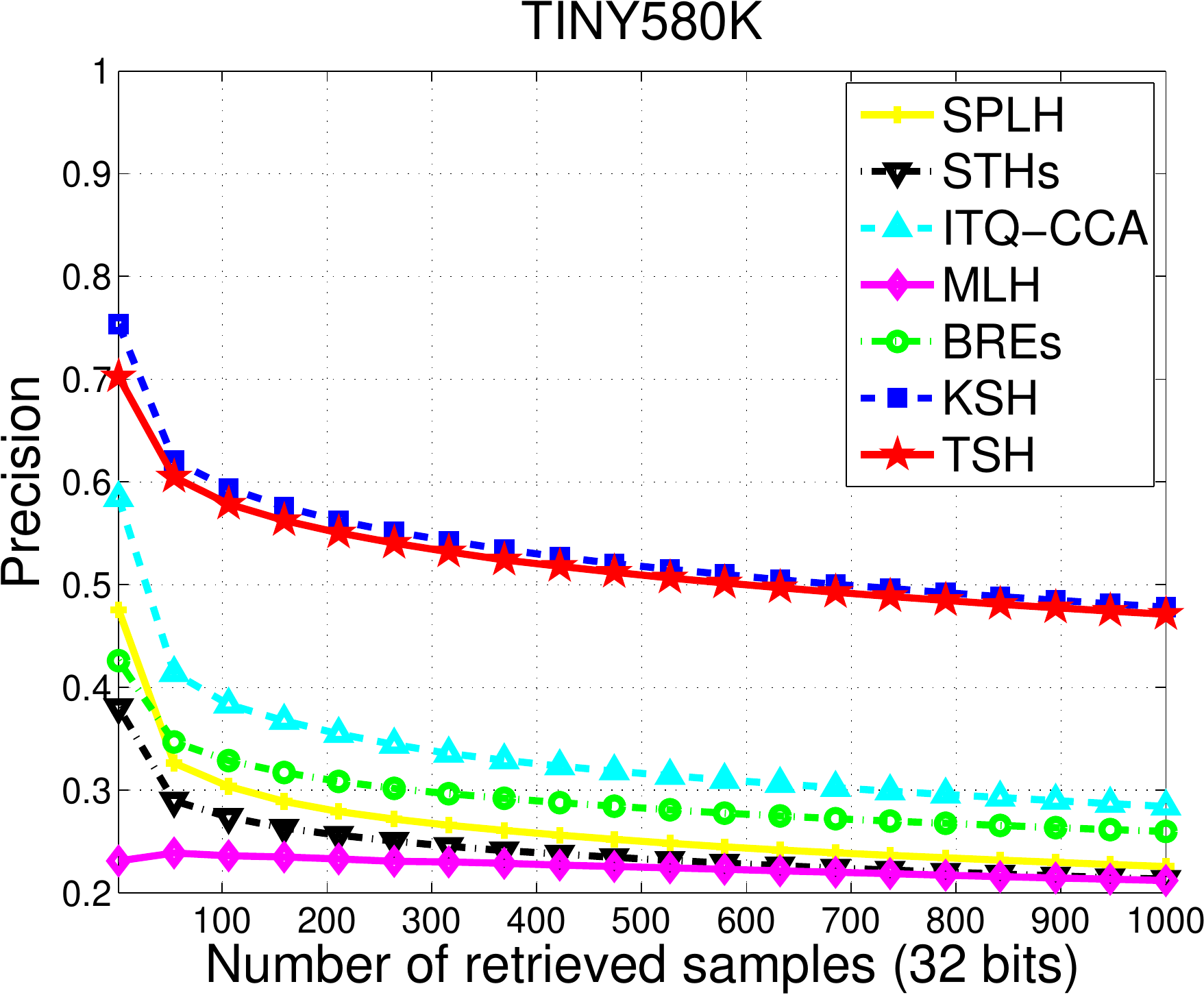}

   \includegraphics[width=.3624\linewidth]{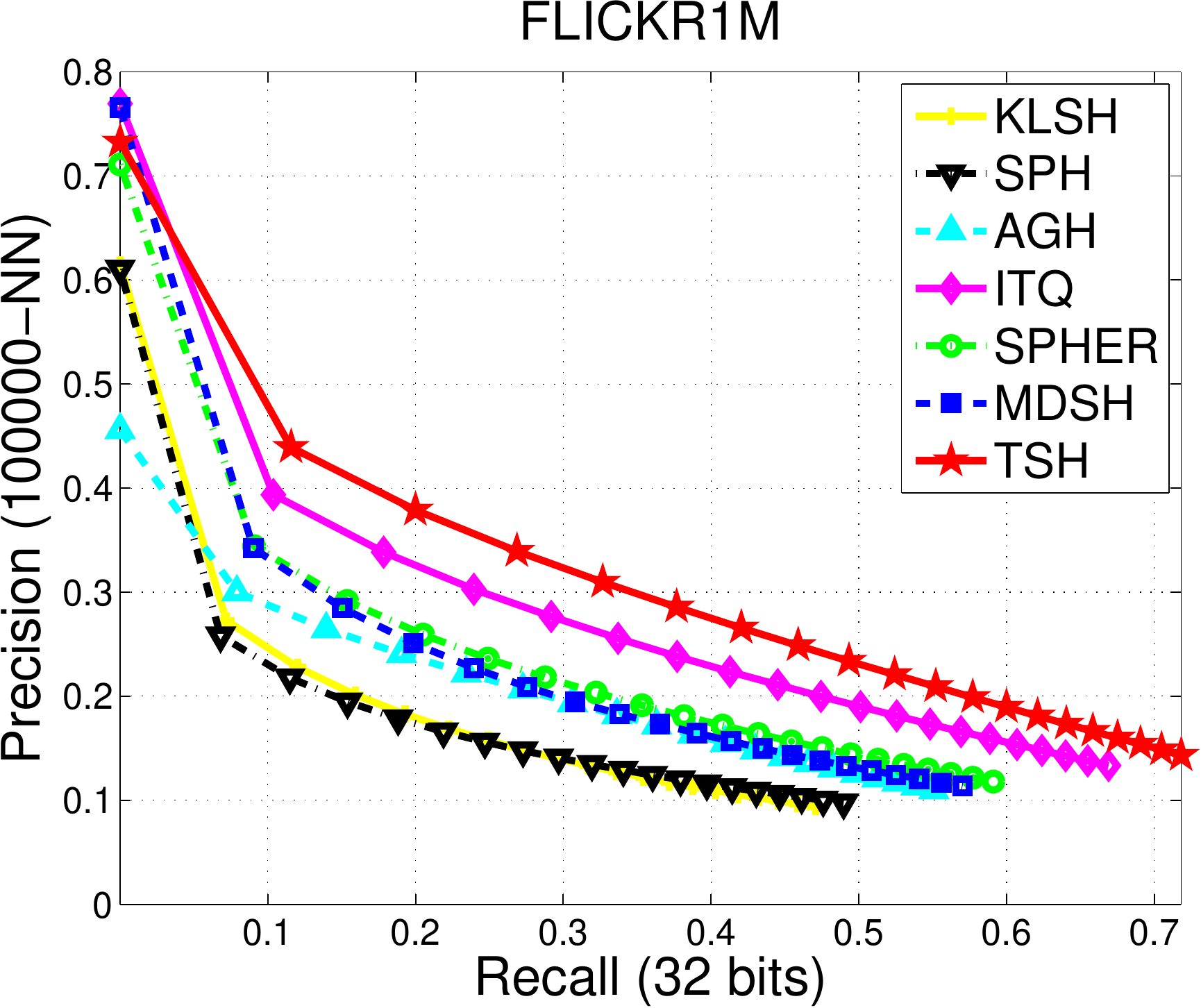}
   \includegraphics[width=.3624\linewidth]{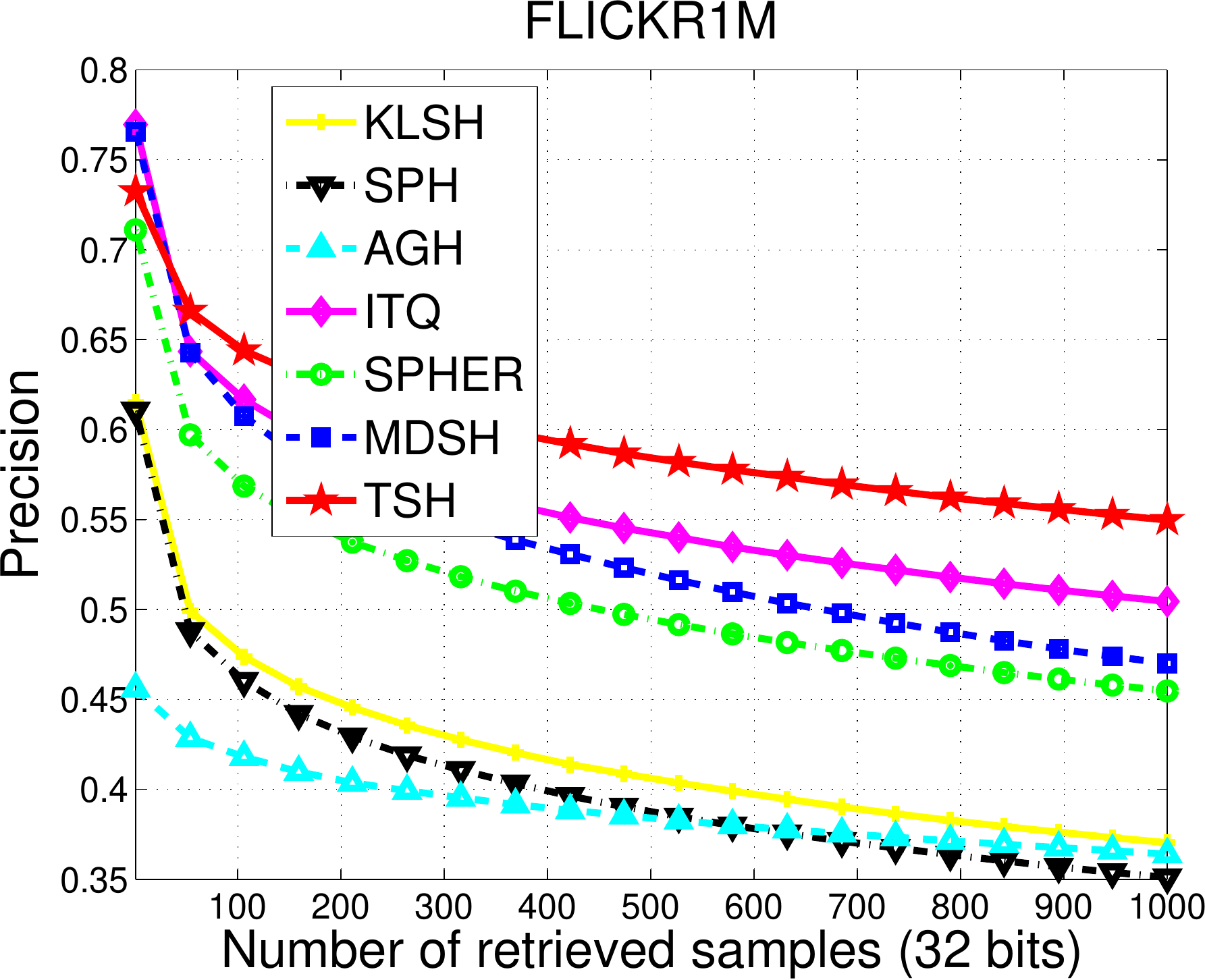}
   \includegraphics[width=.3624\linewidth]{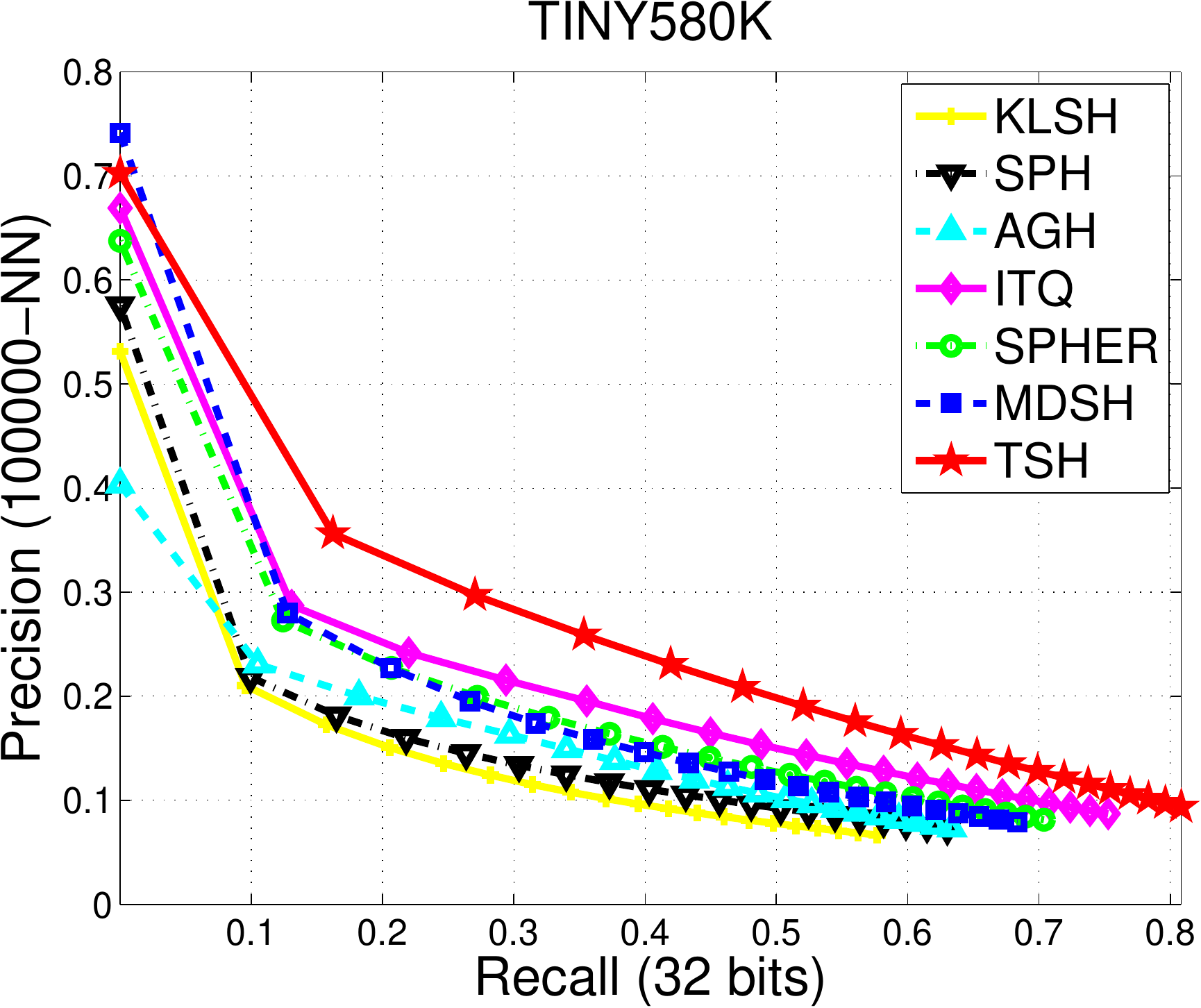}
   \includegraphics[width=.3624\linewidth]{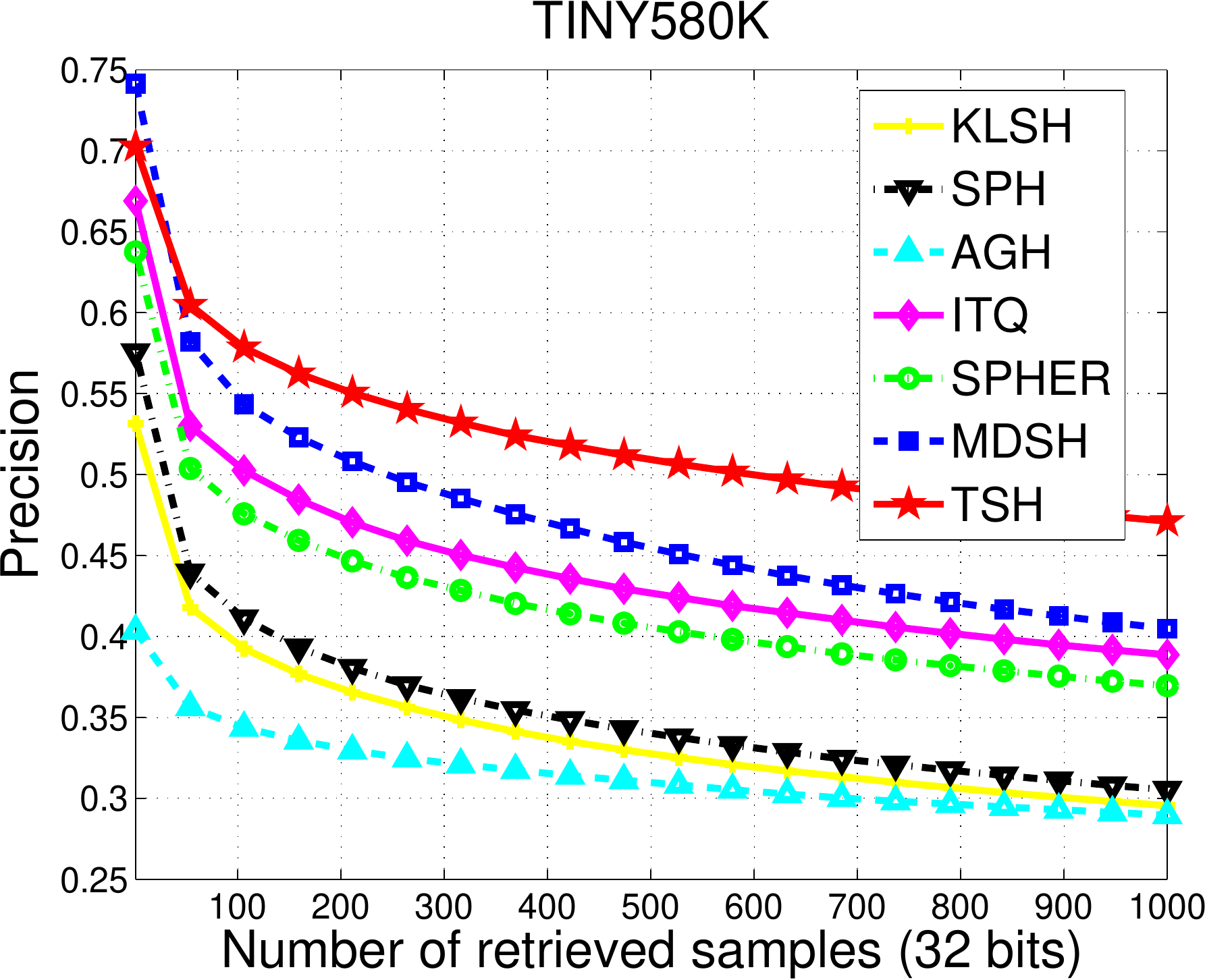}

    \caption{Results on 2 large-scale datasets: Flickr1M and Tiny580k.
    The top four plots show the results of supervised methods and the
    bottom four plots are the results of
    unsupervised methods. Our method TSH achieves on-par results with
    KSH, and TSH and KSH  significantly outperform other methods.
    }
   	\label{fig:large}
\end{figure*}

\textbf{Training time}
In Table\ \ref{tab:trn_time}, we compare the training time of
different methods.
It shows that our method is  fast compared to the
state-of-the-art. We also present the binary code learning time in
the table.
Notice that in the second step, learning hash functions by
binary classification can be easily paralleled which would make our
method even more efficient.

\subsection{Using different hash functions}
We evaluate our method using different hash functions. The hash
functions  are SVM with RBF kernel (TSH-RBF), linear SVM with kernel transferred feature (TSH-KF),
linear SVM (TSH-SVM), Adaboost with decision-stump (TSH-Stump).
Results on 3 datasets  are shown in Fig.\ \ref{fig:wl}. The testing
time for different hash functions are shown in Fig.\ \ref{fig:wl_time}.

It shows that the kernel hash functions (TSH-RBF and TSH-KF) achieve best performance in similarity search.
However, the testing  of linear hash functions is much faster than kernel hash functions.
We also find that the testing time of TSH-KF  is much faster then TSH-RBF.
The TSH-KF is a trade-off between testing time and search performance.

\subsection{Large datasets}

We carry out experiments on 2 large scale datasets: Flickr $1$ million
image dataset (Flickr1M) and $580,000$ Tiny image dataset (Tiny580k).
Results are shown in Fig.\ \ref{fig:large}.
Our method TSH achieve on par results with KSH. KSH and our TSH
significantly outperform other supervised or unsupervised methods.
Notice that there is no semantic similarity ground truth provided on
these two datasets. We generate the similarity ground truth using
the Euclidean distance.
 Some unsupervised methods are also able to perform
 well in this setting (e.g., MDSH, SPHER and ITQ).

\section{Conclusion}

We have shown that it is possible to place a wide variety of
learning-based hashing methods into a common framework, and that doing
so provides insight into the strengths, weaknesses, and commonality
between various competing methods.  One of the key insights arising is
the fact that the code generation and hash function learning processes
may be seen as separate steps, and that the latter may accurately be
composed as a classification problem.  This insight enables the
development of new approaches to hashing, one of which is detailed
above.  Experimental testing has validated this approach, and shown
that this new approach outperforms the state-of-the-art.

{%\footnotesize
\bibliographystyle{ieee}
\bibliography{tsh}
}

\end{document}